\g@addto@macro{\UrlBreaks}{\UrlOrds}
\newtheorem{definition}{Definition}
\newtheorem{proposition}{Proposition}
\begin{document}

\title{Adversarial Diffusion Attacks on Graph-based Traffic Prediction Models}

\author{Lyuyi~Zhu,~\IEEEmembership{}
        Kairui~Feng,~\IEEEmembership{}
        Ziyuan~Pu,~\IEEEmembership{Member, IEEE}
        Wei~Ma,~\IEEEmembership{Member, IEEE}
\thanks{L. Zhu is with the College of Civil Engineering and Architecture, Zhejiang University, Hangzhou, China (E-mail: 3170103586@zju.edu.cn).}
\thanks{K. Feng is with the Department of Civil and Environmental Engineering, Princeton University, NJ, U.S.A (E-mail: kairuif@princeton.edu)}
\thanks{Z. Pu is with the School of Engineering, Monash University, Jalan Lagoon Selatan, 47500 Bandar Sunway, Malaysia (Email: ziyuan.pu@monash.edu). }
\thanks{W. Ma is with the Department of Civil and Environmental Engineering, The Hong Kong Polytechnic University, Hong Kong SAR, China (E-mail: wei.w.ma@polyu.edu.hk).}
\thanks{Manuscript received: \today.}}

\markboth{Manuscript Submitted to IEEE Internet of Things Journal, 2021}%
{}

\maketitle

\begin{abstract}
Real-time traffic prediction models play a pivotal role in smart mobility systems and have been widely used in route guidance, emerging mobility services, and advanced traffic management systems. With the availability of massive traffic data, neural network-based deep learning methods, especially the graph convolutional networks (GCN) have demonstrated outstanding performance in mining spatio-temporal information and achieving high prediction accuracy. Recent studies reveal the vulnerability of GCN under adversarial attacks, while there is a lack of studies to understand the vulnerability issues of the GCN-based traffic prediction models. Given this, this paper proposes a new task -- diffusion attack, to study the robustness of GCN-based traffic prediction models. The diffusion attack aims to select and attack a small set of nodes to degrade the performance of the entire prediction model. To conduct the diffusion attack, we propose a novel attack algorithm, which consists of two major components: 1) approximating the gradient of the black-box prediction model with Simultaneous Perturbation Stochastic Approximation (SPSA); 2) adapting the knapsack greedy algorithm to select the attack nodes. The proposed algorithm is examined with three GCN-based traffic prediction models: \textsc{St-Gcn}, \textsc{T-Gcn}, and \textsc{A3t-Gcn} on two cities. The proposed algorithm demonstrates high efficiency in the adversarial attack tasks under various scenarios, and it can still generate adversarial samples under the drop regularization such as \textsc{DropOut}, \textsc{DropNode}, and \textsc{DropEdge}. The research outcomes could help to improve the robustness of the GCN-based traffic prediction models and better protect the smart mobility systems.
\end{abstract}

\begin{IEEEkeywords}
Traffic Prediction, Deep Learning, Graph Convolutional Network, Adversarial Attack, Intelligent Transportation Systems
\end{IEEEkeywords}

\IEEEpeerreviewmaketitle

\section{Introduction}
\IEEEPARstart{P}{eople's} activities and movements in smart cities rely on accurate, robust, and real-time traffic information. With massive data collected in the intelligent transportation systems (ITS), various methods, such as time series models, state-space models, and deep learning, have been developed to carry out the short-term prediction for traffic operation and management \cite{vlahogianni2004short}. Among these methods, deep learning methods, especially the graph convolutional networks (GCN), achieve state-of-the-art accuracy and are widely employed in industry-level smart mobility applications. For example, Deepmind has partnered with Google Maps to improve the accuracy of real-time Estimated Time of Arrival (ETA) prediction using GCN \cite{deepmind}.

The predicted traffic information plays a critical role in our daily traveling, and travelers take for granted that the predicted results are accurate and trustworthy. However, the robustness and vulnerability issues of these deep learning models have not been investigated for traffic prediction models. Recent studies have shown that neural networks are vulnerable to deliberately designed samples, which are known as adversarial samples. In general, the adversarial samples could be generated by adding imperceptible perturbations to the original data sample. Though the adversarial sample is very similar to its original counterpart, it can significantly change the performance of the deep learning models. 
Szegedy et al. (2013) firstly discovered this phenomenon on deep neural networks (DNN), and they found that adversarial samples are low-probability but densely distributed \cite{szegedy2013intriguing}. Goodfellow et al. \cite{goodfellow2014explaining} also showed neural networks are vulnerable to the adversarial samples in the sense that it is sufficient to generate adversarial samples when DNNs demonstrate linear behaviors in high-dimensional spaces. 

Due to the existence of the adversarial samples, potential attackers could take advantage of the deep learning models and degrade the model performance. Though related theories and applications have been studied in various areas such as computer vision \cite{su2019one}, social networks \cite{10.5555/3306127.3331707}, traffic signs \cite{9165820} and recommendation systems \cite{Fang_2018}, few of the studies have investigated the vulnerability and robustness of the traffic prediction systems. It has been shown that industry-level traffic information systems can be ``attacked'' easily.  Recently, a German artist walked slowly with a handcart, which was loaded with 99 smartphones. On each smartphone, the mobile application Google Maps was turned on. The 99 cell phones virtually created 99 vehicles on the roads, and all the ``vehicles'' were slowly moving along the road. As Google Maps estimated and predicted the traffic states based on the data sent back from those cell phones, it wrongly identified an empty street (green) to be a congested road (red), as shown in Fig.~\ref{fig:attack}.
Though it is unclear which model Google Maps is using, this experiment indeed reveals the possibility of adversarial attacks on real-world and industry-level traffic information systems.


\begin{figure*}[ht]
\centering
\includegraphics[width=0.95\textwidth]{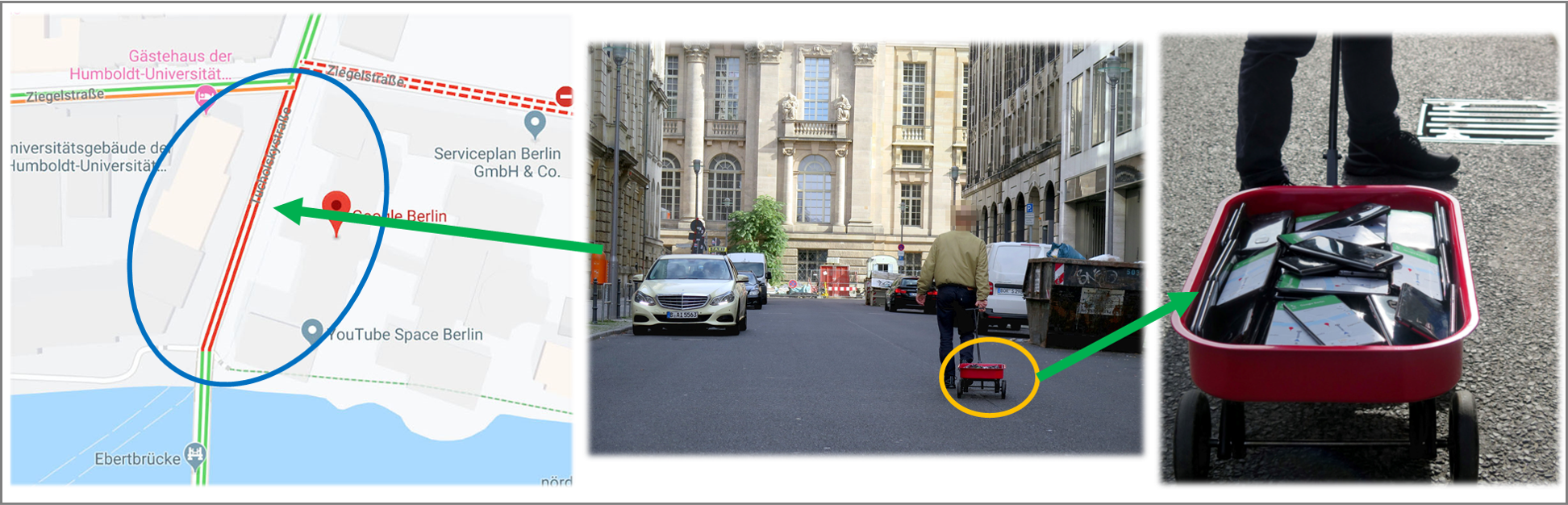}
\caption{Google Maps Hacks using 99 cell phones \cite{simon}.}
\label{fig:attack}
\end{figure*}


Adversarial attacks on traffic prediction models can affect every aspect of the smart mobility systems. We summarize the following four scenarios in which adversarial attacks can significantly degrade the performance of the systems. 
\begin{itemize}
    \item {\bf Smartphone-based mobility applications.} Mobile phone-based mapping services such as Google Maps and AutoNavi make traffic state estimation and prediction based on the GPS trajectories sent from their users \cite{bayen2011mobile}. However, users' mobile phones can be hacked and the information can be deliberately altered to attack the systems \cite{ahvanooey2020survey}. In general, these individual mobility applications are vulnerable to adversarial attacks because the difficulties of hacking users' mobile phones are way lower than hacking the application server.
    \item {\bf Connected vehicle (CV) systems.}  In the CV systems, traffic information is collected by the roadside units (RSUs) and sent to the traffic control center \cite{6823640}. CVs use information from either RSU or control center to plan routes and avoid congestion. However, it is possible to hack some of the RSUs to send the adversarial samples to manipulate the predicted traffic information from the control center. Attackers could make use of adversarial attacks to benefit a group of vehicles while causing unexpected delays to other vehicles.
    \item {\bf Emerging mobility services.} Transportation network companies (TNCs) depend on accurate traffic speed prediction for vehicle dispatching, routing, and relocation on the central platform. It is possible that a group of vehicles collide with each other and falsely report their GPS trajectories and speed to confuse the central platform. By carefully designing the adversarial samples with purpose, this group of vehicles could take advantage of the central platform by receiving more orders and running on less congested roads. 
    \item {\bf Advanced Traffic Management Systems (ATMS).} Most of the network-wide transportation management systems \cite{8226756} rely on user equilibrium (or stochastic user equilibrium) models to depict and predict travelers' behaviors and both models assume that travelers can acquire accurate (or nearly accurate) traffic information. Under adversarial attacks, this assumption no longer holds. The inaccurately predicted traffic information can reduce the effectiveness of the ATMS and degrade the efficiency of the entire network.
\end{itemize}




Adversarial attack for traffic prediction systems is a unique task that is different from existing literature. In this paper, we focus on the GCN-based traffic prediction models. Previous literature aims at attacking one node by modifying the features of all the nodes in the GCN-based neural networks, while this is impractical on traffic prediction models. On real-world traffic networks, modifying the features on all nodes is challenging and costly and our objective is to degrade the network-wide system performance instead of a single node. A practical task is to degrade the overall system performance by perturbing the features on a small subset of nodes, which can be viewed as the opposite of the conventional adversarial attack problems.
To this end, we propose a novel concept -- diffusion attack, and its definition is presented in Definition~\ref{def:diff}.
\begin{definition}[Diffusion Attack]
\label{def:diff}
Considering a GCN-based deep neural network, the input features are associated with each node of the corresponding graph. The diffusion attack aims to modify the input features on a limited set of nodes while keeping the graph topology unchanged, and the goal is to degrade the overall performance of the neural network on all the nodes. 
\end{definition}

An illustration of the diffusion attack is presented in Fig.~\ref{fig:diffuse}. On the left-hand side, the prediction model performs normally and can generate accurate predictions. After the diffusion attack, two nodes in red are attacked, and their nearby neighbors are strongly perturbed, followed by their 2-hop neighbors being slightly perturbed. One can see that the attack effects diffuse from the node being attacked to its neighbors, and later we will develop mathematical proofs and numerical experiments to verify this phenomenon. The purpose of attackers is to select the optimal attack nodes and to generate adversarial samples to maximize attack effects.  


\begin{figure}[ht]
\centering
\includegraphics[height=2.7cm,width=8.8cm]{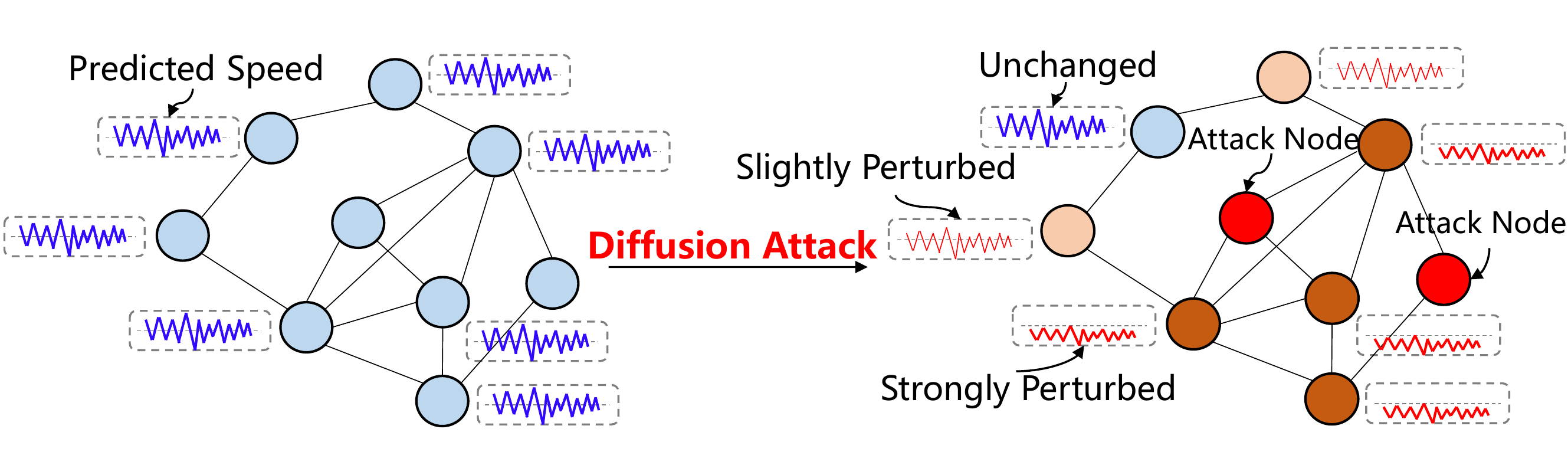}
\caption{An illustration of the diffusion attack.}
\label{fig:diffuse}
\end{figure}





To summarize, vulnerability issues of the traffic prediction models are critical to smart mobility systems, while the related research is still lacking. Given this, we explore the robustness of the GCN-based traffic prediction models. Based on the unique characteristics of the adversarial attacks on traffic prediction models, we propose the concept of diffusion attack, which aims to attack a small set of nodes to degrade the performance of the entire network. On top of that, we develop a diffusion attack algorithm, which consists of two major components: 1) approximating the gradient of the black-box prediction model with Simultaneous Perturbation Stochastic Approximation (SPSA); 2) adapting the knapsack greedy algorithm to select nodes to attack. The proposed algorithm is examined with three GCN-based traffic prediction models: \textsc{St-Gcn}, \textsc{T-Gcn}, and \textsc{A3t-Gcn} on two cities: Los Angeles and Hong Kong. The proposed algorithm demonstrates high efficiency in the diffusion attack tasks under various scenarios, and the algorithm can still generate adversarial samples under different drop regularization. We further discuss how to improve the robustness of the GCN-based traffic prediction models, and the research outcomes could help to better protect the smart mobility systems in both the cyber and physical world. The contributions of this study are summarized as follows:


\begin{itemize}
\item Different from existing adversarial attack tasks on graphs, we propose a novel task of diffusion attack, which aims to select and attack a small set of nodes to degrade the performance of the entire traffic prediction models. This task is suitable for the traffic prediction context, while it is overlooked in the existing literature. 
\item To generate the adversarial samples on traffic prediction models, we propose the Simultaneous Perturbation Stochastic Approximation (SPSA) algorithm to efficiently approximate gradients of the black-box prediction models.  
\item To select the optimal attack nodes, we formulate the diffusion problem as a knapsack problem and then adapt the greedy algorithm to determine the priority of the attack nodes.
\item We conduct extensive experiments to attack the widely-used traffic prediction models on Los Angeles and Hong Kong. Different drop regularization strategies ({\em e.g.}, \textsc{DropOut}, \textsc{DropEdge}, \textsc{DropNode}) for defending the adversarial attacks are also tested to ensure that the proposed algorithm can still generate effective adversarial samples in various scenarios. 

\end{itemize}

The remainder of this paper is organized as follows. Section \ref{sec:literature} reviews the related studies on traffic prediction, GCN models, and adversarial attacks on graphs.  Section~\ref{sec:model} rigorously formulates the diffusion attack problem and presents the developed attack algorithms. In Sections~\ref{sec:exp}, three traffic prediction models and two real-world datasets are used to examine the proposed attack algorithms. Finally, conclusions and future research are summarized in Section \ref{sec:con}.

\section{Related Works}
\label{sec:literature}
In this section, we first overview the traffic prediction tasks and then summarize recent studies on GCN-based traffic prediction models. Robustness issues of the GCN-based prediction models under adversarial attacks are also discussed.

\subsection{Traffic Prediction}
The traffic prediction problem has been extensively studied for decades, and various statistical models have been developed to solve the problem, including History Average (HA) \cite{edes1980improved}, Autoregressive Integrated Moving Average (ARIMA) \cite{ahmed1979analysis, hamed1995short, van1996combining, lee1999application, williams2003modeling}, Support Vector Regression (SVR) \cite{wu2004travel}, clustering \cite{habtemichael2016short}, and Kalman filtering \cite{okutani1984dynamic, van2011localized}. In recent years, the data scale becomes large and the spatio-temporal correlation of the data becomes complicated, and hence traditional statistical methods reveal their limits in face of the massive and complex data. Instead, neural network models demonstrate potentials in traffic prediction with multi-source data on large-scale networks. Various neural network models have been used for traffic prediction, including Convolutional Neural Network (CNN) \cite{ma2017learning}, Recurrent Neural Network (RNN) \cite{hochreiter1997long, azzouni2017long, ramakrishnan2018network},  attention \cite{guo2019attention, 9003261} and Graph Convolutional Network (GCN) \cite{zhao2019t, yu2018spatio, zhu2020a3t, yu2020forecasting}.

Traffic prediction tasks can also be categorized into multiple purposes, such as traffic state prediction, demand prediction, and trajectory prediction \cite{Ye_2020}. Traffic state prediction includes the prediction of traffic flow \cite{guo2019attention}, speed \cite{zhao2019t}, and travel time \cite{wang2018when}. Traffic demand prediction aims to make prediction of the number of users and traffic demand, such as taxi request \cite{bai2019stg2seq}, subway inflow/outflow \cite{liu2020physicalvirtual}, bike-sharing demand \cite{Lin_2018, 9245516}, and origin-destination demand \cite{8758916}. It is also possible to predict the trajectories of travelers and vehicles, and this task is used for dynamic positioning and resource allocation \cite{monti2020dagnet, mohamed2020socialstgcnn}. Overall, most of the traffic prediction tasks can be carried out by neural network models, and hence it is crucial to study their vulnerability issues. 

\subsection{GCN and its Applications on Traffic Prediction}
Traffic data is closely associated with the topological structure of the road networks, and hence it is typical graph-based data. The graph-based data is represented in the non-Euclidean space, and conventional machine learning methods ({\em e.g.,} multi-layer perceptron) overlooks the graph-based inter-relationship among data \cite{bronstein2017geometric}. In this paper, we summarize that traffic data consists of the following two types of information:
\begin{itemize}
    \item {\bf Spatial Information.} Traffic-related data can be presented on a graph $\mathcal{G}=(\mathcal{V},\mathcal{E})$, where $\mathcal{V}= \{1, 2, \cdots, N\} $ represents a set of nodes with $N = |\mathcal{V}|$, and $\mathcal{E}$ denotes a set of edges. In traffic prediction, one way to construct the graph is to make each node $v_i$ represent a road segment, and each edge represents the connectivity relationship between the road segments \cite{zhao2019t}. We further define the adjacency matrix $\bm{A} \in \{0,1\}^{N \times N}$ on the graph $\mathcal{G}$, where $A_{ij} = 1$ when node $i$ connects node $j$, and $0$ otherwise.
    \item {\bf Temporal Information.} Traffic data, such as traffic speed, density, and flow, on each node can be viewed as a time series. 
    The traffic data on the graph is represented by a feature matrix $\bm{X} \in \mathbb{R}^{N \times S}$, where $S$ denotes the number of time intervals in the study period. 
\end{itemize}

Graph Convolutional Network (GCN) demonstrates great capacities in learning graph-based information for various applications. Short-term traffic prediction is one of the most important and practical applications. The GCN can be used to extract the spatial (non-Euclidean) relationship among nodes \cite{zhao2019t}, and it can couple with recurrent neural networks (RNN) to learn the spatio-temporal patterns of the traffic data. For example, Long Short-Term Memory (LSTM) is widely used with GCN for traffic prediction \cite{MA2015187}, and the Gated Recurrent Unit (GRU) is also adopted to model the time-series on each node \cite{zhao2019t}. Some emerging models for time-series modeling, such as gated CNN and attention, can also be incorporated into the GCN-based deep learning framework \cite{yu2018spatio, zhu2020a3t, guo2019attention, yu2020forecasting}. Readers are referred to 
\cite{Ye_2020} for a comprehensive review of the GCN-based traffic prediction methods.


As the backbone of a GCN-based model, the graph convolutional layer is defined as follows:
\[
\bm{H}^{(l+1)}=\sigma(\hat{\bm{A}} \bm{H}^{(l)} \bm{W}^{(l)}),
\]
where $l$ is the layer index, $\bm{H}^{(0)}=\bm{X}$, and $\hat{\bm{A}} = \widetilde{\bm{D}}^{-\frac{1}{2}} \widetilde{\bm{A}} \widetilde{\bm{D}}^{-\frac{1}{2}}$ is the Laplacian matrix. $\widetilde{\bm{D}}$ is the corresponding degree matrix $\widetilde{\bm{D}}_{ii} = \sum_{j} \bm{A}_{ij}$, and $\widetilde{\bm{A}} = \bm{A} + \bm{I}_N$, where $\bm{I}_N$ is a $N \times N$ identity matrix. $\sigma(\cdot)$ represents the activation function and $\bm{W}^{(l-1)}$ are parameters of the $l$th layer.

A $L$-layers GCN model can be expressed as following:
\begin{equation}
\label{eq:f}
\bm{Y} = f(\bm{X},\bm{A})=g(\hat{\bm{A}}\cdots \sigma(\hat{\bm{A}}\bm{X}\bm{W}^{(0)})\cdots\bm{W}^{(L-1)}),
\end{equation}
where $g(\cdot)$ is a generalized function, $\bm{Y} \in \mathbb{R}^{N\times T}$ and the parameters $\bm{W}^{(l-1)}$ for each layer could be learned by minimizing the loss between the estimated $\bm{Y}$ and true $\bm{Y}_{true}$, represented by $\mathcal{L}\left(\bm{Y}, \bm{Y}_{true}\right)$. This paper will adopt the GCN as the backbone model and study its robustness and vulnerability issues under adversarial attacks.



\subsection{Adversarial Attacks On Graphs}
Like other neural networks, GCN is vulnerable to adversarial attacks. Various attack concepts on graph have proposed, such as targeted and non-targeted attacks, structure and feature attacks, poisoning and evasion attacks. Targeted attacks aim to attack a target node \cite{dai2020targeted}, while the non-targeted attacks aim to compromise global performance of a model \cite{zugner2019adversarial, dai2018adversarial, ma2020practical, ma2021nearblackbox}. Structure attacks modify the structure of graph, such as adding or deleting nodes/edges \cite{dai2018adversarial, xu2019topology, 10.1145/3427228.3427245}, and feature attacks perturb the labels/features of nodes without changing the connectivity of graph \cite{zugner2019adversarial, liu2021one, finkelshtein2020singlenode}. Poisoning attacks modify the training data \cite{zugner2019adversarial}, and evasion attacks insert an adversarial samples when using the models \cite{dai2018adversarial, wang2020evasion, liu2021one, ma2020practical}. To be specific, attacks on traffic prediction systems are non-targeted, feature, and evasion attacks, which have not been well studied in the existing literature. 

The attack algorithms can be further categorized into  white-box attacks and black-box attacks. In white-box attacks, the neural network structure and parameters, training methods, and training samples are exposed to attackers. Attackers could utilize the neural network model to generate adversarial samples. Many white-box methods achieve great performance, such as fast gradient sign method (FGSM) \cite{goodfellow2014explaining}, Jacobian-based saliency map approach (JSMA) \cite{papernot2016limitations}, Carlini and Wagner Attacks (CW) \cite{carlini2017towards},  and Deepfool \cite{moosavi2016deepfool}. In contrast, in black-box attacks, attackers know little about internal information of the target neural network model, especially the model structure and parameters, and only the model input and output are exposed to the attackers. For example, if attackers plan to attack the traffic prediction system, he/she is unlikely to know the internal structure or information of the prediction model. 

Though it is more common in the real world, the black-box attack is much more challenging than the white-box attack. Black-box attacks can be achieved through response surface models \cite{papernot2017practical} and meta-heuristic. For instance, one-pixel attack \cite{su2019one} uses Differential Evolution (DE) algorithm and decision-based attack \cite{brendel2017decision} uses Covariance Matrix Adaptation Evolutionary Strategies (CMA-ES) to generate and improve the adversarial samples by iteration. It is also possible to approximate the gradient of the target models, and the representative models include Zeroth Order Optimization (ZOO) \cite{chen2017zoo}, Autoencoder-based Zeroth Order Optimization Method (AutoZOOM) \cite{tu2019autozoom}, and Natural Evolutionary Strategies (NES) \cite{ilyas2018black}. Besides, the semi-black-box attacks are in between, and it is assumed that the information of the prediction models is partially observed \cite{akhtar2018threat}. The above attack algorithms mainly focus on the classification task, while attack methods for traffic prediction models ({\em i.e.} regression task) are still lacking.





\section{Proposed Works}
\label{sec:model}
In this section, we first present the general formulation of the adversarial attack on graphs. Then, the new concept of diffusion attack is proposed for traffic prediction models. Lastly, we propose the new formulation and algorithm to construct the diffusion attack and discuss its implementations. 

\subsection{Preliminaries}
Traffic prediction on graphs can be regarded as a graph-based regression problem \cite{Ye_2020}. Using $f: \mathbb{R}^{N \times S} \rightarrow \mathbb{R}^{N \times T}$ in Equation~\ref{eq:f} as a regression model ({\em e.g.,} a traffic prediction model) on graph $\mathcal{G}$, and $f$ contains a $L$-layer GCN, where $S$ represents the look-back time interval, and $T$ is the number of time intervals to predict. The feature matrix $\bm{X}$ contains the historical traffic states, and $\bm{Y}$ is the traffic states we want to predict. 
For traffic prediction problems, $\bm{Y}$ mainly depends on $\bm{X}$ as most of $\bm{A}$ are fixed \cite{Ye_2020}. We further denote $\bm{X} = (\bm{x}_1; \cdots; \bm{x}_N)$, $\bm{Y}= f(\bm{X}) = (\bm{y}_1; \cdots; \bm{y}_N)$, and $\bm{x}_i$ and $\bm{y}_i$ are the $i$th row of $\bm{X}$ and $\bm{Y}$, respectively. For node $i$, $\bm{x}_i$ is the $i$th row of the feature matrix $\bm{X}$, and $\bm{x}_i$ represents a time series of speed on node $i$. The prediction model for node $i$ can be written as $\bm{x}_i = (x_{i1}, x_{i2}, \cdots, x_{iS}) \mapsto \bm{y}_i = (y_{i1}, y_{i2}, \cdots, y_{iT})$, where $x_{i\cdot}$ is the historical traffic states, and $y_{i\cdot}$ is the future traffic states on node $i$. 

\subsection{Diffusion Attacks on Traffic State Forecasting Models}
This paper aims to attack the traffic prediction system by adding perturbations on the feature matrix $\bm{X}$, to maximally change the prediction results over a selected set of nodes. As discussed above, the graph structure is fixed and cannot be changed easily for the problem of traffic prediction, so we assume that $\bm{A}$ is fixed throughout the paper.

We construct an adversarial sample by adding the perturbation $\bm{U}$, which is the same size as $\bm{X}$, to the original input feature matrix $\bm{X}$, as represented by $\bm{X}' = \bm{X} + \bm{U}$. Consequently, the corresponding output changes from $\bm{Y} = f(\bm{X})$ to $\bm{Y}' = f(\bm{X} + \bm{U})$.


We suppose that attackers select a set of nodes $\mathcal{P} \subseteq \mathcal{V}$ to attack. For each node $i \in \mathcal{P}$, $\bm{u}_i
\begin{cases} 
\neq \bm{0}& i \in \mathcal{P}\\ 
= \bm{0}& i \notin \mathcal{P} 
\end{cases}$, where $\bm{u}_i$ is the $i$th row of $\bm{U}$. Then adversarial sample can be expressed as follows:
\begin{equation}
\bm{X}'= \bm{X} + \bm{U} = (\bm{x}_1 + \bm{u}_1; \cdots; \bm{x}_{i} + \bm{u}_{i}; \cdots; \bm{x}_{N} + \bm{u}_{N}). \nonumber
\end{equation}



By perturbing the node $i$, we change the original prediction result on node $i$ (denoted as $\bm{y}_{i}$) to $\bm{y}'_{i}$. The attack influence function $\phi_i(\bm{U})$ on node $i$ is defined as follows:
\begin{equation} \label{eq1}
  \phi_i(\bm{U}) = \mathcal{L}\left( \bm{y}'_i(\bm{X}'), \bm{y}_i (\bm{X}) \right),
\end{equation}
where we use $\bm{y}_i (\bm{X})$ to indicate that $\bm{y}_i$ is a function of $\bm{X}$. $\mathcal{L}(\cdot,\cdot)$ represents loss function between $\bm{y}'_i$ and $\bm{y}_i$. 
Here the attack influence evaluates the difference between original prediction (instead of true speed) and perturbed speed. The reason is that we assume the prediction is accurate enough, otherwise there is no need to attack. On the other hand, we could never know the true traffic condition in the future, so it is impossible to perturb the prediction against true values. 

To mathematically characterize the diffusion phenomenon when attacking the GCN, we demonstrate that the following Proposition~\ref{prop:diff} holds.
\begin{proposition}
\label{prop:diff}
Using the $L$-layer GCN model presented in Equation~\ref{eq:f} for traffic prediction, the effect of perturbation $\bm{U}$ on each node $i$, which is denoted as $\phi_i(\bm{U})$, depends on the perturbations of its $L$-hop neighbors. 
\end{proposition}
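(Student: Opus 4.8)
The plan is to reduce the statement to a locality property of the GCN map $f$, namely that the $i$-th row $\bm{y}_i$ of the output $\bm{Y}=f(\bm{X},\bm{A})$ is a function of only those rows $\bm{x}_k$ of the input for which the graph distance $d_{\mathcal{G}}(i,k)$ is at most $L$; here $d_{\mathcal{G}}$ denotes shortest-path distance in $\mathcal{G}$ and the ``$L$-hop neighborhood'' of $i$ is $\mathcal{N}_L(i)=\{k\in\mathcal{V}: d_{\mathcal{G}}(i,k)\le L\}$. Granting this, the proposition is immediate: since $\bm{X}'=\bm{X}+\bm{U}$, the perturbed output $\bm{y}'_i$ is determined by the rows $\{\bm{x}_k+\bm{u}_k : k\in\mathcal{N}_L(i)\}$, so if $\bm{u}_k=\bm{0}$ for every $k\in\mathcal{N}_L(i)$ then $\bm{y}'_i=\bm{y}_i$ and $\phi_i(\bm{U})=\mathcal{L}(\bm{y}_i,\bm{y}_i)$ is unchanged. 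Hence $\phi_i(\bm{U})$ depends on $\bm{U}$ only through the block of rows indexed by $\mathcal{N}_L(i)$, which is exactly the claim.

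To prove the locality property I would first record the sparsity pattern of the propagation operator. Because $\widetilde{\bm{A}}=\bm{A}+\bm{I}_N$ contains all self-loops, $(\widetilde{\bm{A}}^m)_{ij}>0$ precisely when $d_{\mathcal{G}}(i,j)\le m$, and since $\widetilde{\bm{D}}$ is diagonal with strictly positive entries, $\hat{\bm{A}}=\widetilde{\bm{D}}^{-1/2}\widetilde{\bm{A}}\widetilde{\bm{D}}^{-1/2}$ has exactly the same zero/nonzero pattern as $\widetilde{\bm{A}}$; in particular $\hat{\bm{A}}_{ij}\neq 0 \iff d_{\mathcal{G}}(i,j)\le 1$. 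Then I would induct on the layer index $l$ for the hidden states $\bm{H}^{(0)}=\bm{X}$, $\bm{H}^{(l+1)}=\sigma(\hat{\bm{A}}\bm{H}^{(l)}\bm{W}^{(l)})$, with the claim that row $i$ of $\bm{H}^{(l)}$ depends only on $\{\bm{x}_k:d_{\mathcal{G}}(i,k)\le l\}$. The base case $l=0$ holds since $\bm{H}^{(0)}_i=\bm{x}_i$. For the step, $\bm{H}^{(l+1)}_i=\sigma\!\big(\sum_j \hat{\bm{A}}_{ij}\,\bm{H}^{(l)}_j\,\bm{W}^{(l)}\big)$ sums only over $j$ with $d_{\mathcal{G}}(i,j)\le 1$; by the induction hypothesis each $\bm{H}^{(l)}_j$ depends only on rows $\bm{x}_k$ with $d_{\mathcal{G}}(j,k)\le l$, so by the triangle inequality $\bm{H}^{(l+1)}_i$ depends only on rows within distance $l+1$ of $i$ (the entrywise activation $\sigma$ does not enlarge the dependency set). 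Unrolling Equation~\ref{eq:f} shows it applies $\hat{\bm{A}}$ exactly $L$ times --- once inside each of $\bm{H}^{(1)},\dots,\bm{H}^{(L-1)}$ and once more in the final layer/readout $\bm{Y}=g(\hat{\bm{A}}\bm{H}^{(L-1)}\bm{W}^{(L-1)})$ --- so carrying the induction through gives the desired $L$-hop dependency. I would then also restate the conclusion in gradient form, $\partial\bm{y}_i/\partial\bm{x}_k=\bm{0}$ whenever $d_{\mathcal{G}}(i,k)>L$, since that is the form used by the attack algorithm later.

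The main obstacle --- indeed the only delicate point --- is the ``generalized function'' $g$ in Equation~\ref{eq:f}: the argument requires that $g$ act row-wise (node-wise), i.e.\ that row $i$ of $\bm{Y}$ be determined by row $i$ of $g$'s argument, so that $g$ introduces no extra cross-node mixing beyond the single $\hat{\bm{A}}$ factor already counted. I would state this as a standing assumption and note that it holds for the readouts actually used (a per-node fully connected layer, a GRU output, or the identity), and that without it an ``$L$-hop'' statement is not even well posed. Everything else is bookkeeping: checking that the number of $\hat{\bm{A}}$ factors matches the layer count $L$, and formalizing ``depends only on'' either set-theoretically as above or via the vanishing of the relevant Jacobian blocks.
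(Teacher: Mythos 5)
Your proof is correct, but it takes a genuinely different route from the paper's. The paper argues via a quantitative upper bound: assuming $g$ is Lipschitz, $\sigma$ is \texttt{Relu}, and the weight norms are bounded by $W$, it pushes the perturbation through the layers to obtain $\phi_i(\bm{U}) \leq M W^{L}\left\|\hat{\bm{A}}^{L}_{i\cdot}|\bm{U}|\right\|_2^2$, and then reads off the $L$-hop locality from the fact that $\hat{\bm{A}}^{L}_{ih}=0$ outside the $L$-hop neighborhood. You instead prove an exact structural statement by induction on the layer index, using only the sparsity pattern of $\hat{\bm{A}}$ (which matches that of $\widetilde{\bm{A}}=\bm{A}+\bm{I}_N$ because $\widetilde{\bm{D}}^{-1/2}$ is diagonal and positive): row $i$ of $\bm{H}^{(l)}$ is a function of input rows within distance $l$, hence $\bm{y}'_i=\bm{y}_i$ whenever $\bm{U}$ vanishes on $\mathcal{N}_L(i)$. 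Your version is stronger as a locality claim and needs fewer analytic hypotheses (no Lipschitz constant, no ReLU, no bound on $\|\bm{W}^{(l)}\|$, no choice of $\mathcal{L}$), and your explicit standing assumption that $g$ acts row-wise is a point the paper glosses over --- its step $[g(\cdot)-g(\cdot)]_i \le M\|[\cdot]_i\|$ tacitly requires the same thing. What the paper's bound buys that yours does not is the quantitative weight $\hat{\bm{A}}^{L}_{ih}$ on the influence of node $h$ on node $i$ \emph{within} the $L$-hop neighborhood, which is what motivates the diffusion picture and the greedy node-selection heuristic later; your Jacobian-block restatement could be extended to recover this, but as written your argument is purely qualitative (zero versus nonzero dependence).
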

\begin{proof} 
Using $|\cdot|$ to represent the element-wise absolute value operator, and assuming that $g(\cdot)$ is Lipschitz continuous with constant $M$, $[\cdot]_i$ is the $i$th row of a matrix, $\sigma(\cdot)$ is the \texttt{Relu} function, and $\mathcal{L}$ represents the Mean Squared Error (MSE), we have
{\small
\begin{equation}
\begin{array}{lllllll}
\phi_i(\bm{U})&=& \left\| \bm{y}'_i(\bm{X}') - \bm{y}_i (\bm{X}) \right\|_2^2\\
&=& \left\| \left[g \! \left( \! \hat{\bm{A}} \! \bm{H}'^{(L-1)} \! \bm{W}^{(L-1)} \! \right)  - \! g \! \left(\! \hat{\bm{A}} \! \bm{H}^{(L)} \! \bm{W}^{(L-1)} \! \right)  \right]_i \right\|_2^2\\
&\leq& M \left\|  \left[ \hat{\bm{A}}(\bm{H}'^{(L-1)} -\bm{H}^{(L-1)} )\bm{W}^{(L-1)} \right]_i \right\|_2^2\\
&\leq&M  \left\|  \left[\hat{\bm{A}}\left(|\bm{H}'^{(L-1)} - \bm{H}^{(L-1)}|\right)|\bm{W}^{(L-1)}| \right]_i \right\|_2^2\\
&\leq&M  \left\|  \left[\hat{\bm{A}}\left(|\bm{H}'^{(L-1)} - \bm{H}^{(L-1)}|\right) \right]_i \right\|_2  \left\| |\bm{W}^{(L-1)}| \right\|_2^2\\
&\leq&MW  \left\|  \left[\hat{\bm{A}}\left(|\bm{H}'^{(L-1)} - \bm{H}^{(L-1)}|\right) \right]_i \right\|_2^2  \\
&\leq&MW^2 \left\|  \left[\hat{\bm{A}}^2\left(|\bm{H}'^{(L-2)} - \bm{H}^{(L-2)}|\right) \right]_i \right\|_2^2 \\
&\leq& MW^{L} \left\|  \left[\hat{\bm{A}}^{L}\left(|\bm{H}'^{(0)} - \bm{H}^{(0)}|\right) \right]_i \right\|_2^2\\
&=&MW^{L} \left\| \hat{\bm{A}}^{L}_{i \cdot}|\bm{U}| \right\|_2^2,
\end{array}\nonumber
\end{equation}
}
where $\left\| |\bm{W}^{(l)}| \right\|_2^2 \leq W$ and $\hat{\bm{A}}_{i \cdot}^{L}$ represents the $i$th row of $\hat{\bm{A}}^{L}$. Here we can see that the upper bound of attack influence $\phi_i(\bm{U})$ associates closely with $\left\| \hat{\bm{A}}^{L}_{i \cdot}|\bm{U}| \right\|_2^2$. Denoting $|\bm{U}|_{\cdot k}$ as the $k$th column of $|\bm{U}|$, we can expand $\left\| \hat{\bm{A}}^{L}_{i \cdot}|\bm{U}| \right\|_2^2$ as follows:
\begin{equation}
\begin{array}{ll}
\left\| \hat{\bm{A}}^{L}_{i \cdot}|\bm{U}| \right\|_2^2 &= \left\| \left[\hat{\bm{A}}^{L}_{i \cdot} |\bm{U}|_{\cdot 1}, \cdots ,\hat{\bm{A}}^{L}_{i \cdot} |\bm{U}|_{\cdot k},\cdots, \hat{\bm{A}}^{L}_{i \cdot} |\bm{U}|_{\cdot S}\right] \right\|_2^2\\
&=\sum_{k=1}^{S} (\hat{\bm{A}}^{L}_{i \cdot} |\bm{U}|_{\cdot k})^2
\nonumber
\end{array}
\end{equation}
For each $k$ we can further expand it as the sum of perturbation on different nodes:
\begin{equation}
\begin{array}{ll}
\hat{\bm{A}}^{L}_{i \cdot} |\bm{U}|_{\cdot k} = \sum_{h=1}^{N} \hat{\bm{A}}^{L}_{ih} |\bm{U}|_{hk},
\nonumber
\end{array}
\end{equation}
where $|\bm{U}|_{hk}$ is the absolute value of perturbation added on $k$th historical traffic state of node $h$. $\hat{\bm{A}}^{L}_{ih}$ represents normalized connectivity weight between node $i$ and node $h$ (similar to $\bm{A}^L_{ih}$, which represents number of $L$-hop paths between $i$ and $h$, according to the graph theory). If $\hat{\bm{A}}^{L}_{ih}$ is large, node $h$ will have more impact on $\phi_i(\bm{U})$. Specially, if node $h$ is out of $L$-hop neighbors of node $i$, then $\hat{\bm{A}}^{L}_{ih} \equiv 0$, which means that attack effect will not diffuse to node $i$ when attacking $h$. For most of GCN-based traffic prediction models $L \leq 3$, so the effect of $\bm{U}$ only diffuses to its local neighbors.

\end{proof}



As discussed in the previous section, this paper focuses on the novel concept of diffusion attack, which aims at changing the network-wide prediction results by perturbing a small subset of node features. Mathematically, the diffusion attack problem can be expressed as to find the optimal $\mathcal{P}$ and the corresponding perturbation $\bm{U}$ such that the following influence function $\Phi(\bm{U})$ is maximized:

\begin{equation} \label{eq:diff}
\begin{array}{llllll}
\max\limits_{\bm{U}, \bm{z}} && \Phi(\bm{U}) = \sum_{i \in \mathcal{V}} w_i \phi_i(\bm{U}) &\\
s.t.& &-\varepsilon^- z_i \bm{x}_{i} \leq \bm{u}_{i} \leq \varepsilon^+ z_i \bm{x}_{i} &\forall i \in  \mathcal{V}\\
&& \sum_{i \in \mathcal{V}} b_i z_i \leq B\\
&& z_i \in \{0,1\} &\forall i \in \mathcal{V}
\end{array}
\end{equation}
where $z_i$ indicates whether node $i$ is attacked. To be precise, $z_i = \begin{cases}  1& i \in \mathcal{P}\\ 0& i \notin \mathcal{P} 
\end{cases}$. $b_i$ represents the cost of attacking node $i$, and $B$ is the total budget. The objective function $\Phi(\bm{U})$ represents attack influence function for the entire network, and $w_i$ is a pre-determined importance weight of node $i$. $\varepsilon^-, \varepsilon^+ >0$ are used to control the scale of the perturbations. 

Formulation~\ref{eq:diff} is a mixed integer programming (MIP), and it contains two components: 1) determining $\bm{U}$ given a fixed $\mathcal{P}$; 2) determining $\mathcal{P}$. The two components will be discussed in section~\ref{sec:spsa} and \ref{sec:greedy}, respectively.



\subsection{Black-box attacks using SPSA}
\label{sec:spsa}

Given a fixed $\mathcal{P}$, Equation~\ref{eq:diff} reduces to a continuous optimization problem with affine constraints, as shown in the following equation:
\begin{equation} \label{eq:spsa}
\begin{array}{llllll}
 \max\limits_{\bm{U}} && \Phi(\bm{U}) = \sum_{i \in \mathcal{V}} w_i \phi_i(\bm{U}) &\\
s.t.& &-\varepsilon^-  \bm{x}_{i} \leq \bm{u}_{i} \leq \varepsilon^+  \bm{x}_{i} &\forall i \in  \mathcal{P}\\
\end{array}
\end{equation}

Formulation~\ref{eq:spsa} suggests that an ideal perturbation $\bm{U}^{\ast}$ should be small, and meanwhile it maximizes the overall influence function $\Phi(\bm{U}^{\ast})$. In real-world applications, the internal information about the traffic prediction model is opaque, hence it is proper to consider the traffic prediction model as a black-box. We assume both input feature $\bm{X}$ and prediction results $\bm{Y}$ are known to attackers as both matrices represent the true and predicted traffic states in real-world, and hence Formulation~\ref{eq:spsa} can be viewed as a black-box optimization problem. To solve it, we adopt the Simultaneous Perturbation Stochastic Approximation (SPSA) method, which is featured with its efficiency and scalability \cite{spall1998overview}. In recent years, SPSA has been used for adversarial attacks on classification problems \cite{pmlr-v80-uesato18a}, while it has not been used for regression problems, in particular, the traffic prediction problem. 


The SPSA method uses finite differences between two randomly perturbed inputs to approximate the gradient of the objective function. Mathematically, the gradient of $\Phi$ can be calculated as follows:
\begin{equation}
\label{eq:approx}
  \widehat{\nabla \Phi}(\bm{U}_n) = \frac{\Phi(\bm{U}_n + c_n \bm{\Delta}_n) - \Phi(\bm{U}_n - c_n \bm{\Delta}_n)}{2 c_n \bm{\Delta}_n},
\end{equation}
where $n$ is the index of the iteration, $\bm{\Delta}_n$ is a random perturbation vector whose elements are sampled from Rademacher distribution (Bernoulli $\pm1$ distribution with probability $p=0.5$, and we denote $\textsc{Rad}_{1 \times S} \in \{-1, 1\}^{1 \times S}$ as a sample vector that follows Rademacher distribution). We further denote sequences $\{ c_n \}$ and $\{a_n\}$ as follows:
\begin{equation} \label{eq:spsaseq}
a_n = \frac{a}{(\eta + n) ^ \alpha} \quad\quad c_n = \frac{c}{n^ \gamma},
\end{equation}
where $a,c,\alpha,\gamma$ are hyper-parameters for SPSA \cite{spall1998overview}. Both sequences decrease when the iteration $n$ increases. Then the gradient ascent approach is utilized to maximize $\Phi(\bm{U})$, as shown in the following equation:
\begin{equation}
\label{eq:update}
  \bm{U}_{n+1} = \bm{U}_n + a_n\widehat{\nabla \Phi}(\bm{U}_n)\quad \forall n.
\end{equation}

To summarized, the adversarial attack with fixed node set $\mathcal{P}$ is presented in Algorithm~\ref{alg:spsa}.





\begin{algorithm} 
\caption{Determine the adversarial sample $\bm{X}'$ and optimal perturbation $\bm{U}$ given a fixed $\mathcal{P}$}
\begin{algorithmic}
\REQUIRE Traffic prediction model $f(\bm{X})$, input feature matrix $\bm{X}$, attack set $\mathcal{P}$, maximum iteration $\texttt{MaxIter}$.
\ENSURE Adversarial sample $\bm{X}'$, and optimal perturbation $\bm{U}$. 
\STATE Initialize $\bm{U}_1 \in \mathbb{R}^{N \times S}$
\FOR{$n = 1, 2, \cdots, \texttt{MaxIter}$} 
  \STATE Update $a_n$ and $c_n$ based on Equation~\ref{eq:spsaseq}.
  \STATE For $i\in\mathcal{V}$, random sample $\bm{\delta}_i$ = $\begin{cases}
                        \bm{0}_{1 \times S} &  \text{if $i \notin \mathcal{P}$} \\
                        \textsc{Rad}_{1 \times S}  & \text{if $i \in \mathcal{P}$}
                     \end{cases}$
  \STATE $ \bm{\Delta} = (\bm{\delta}_1; \cdots; \bm{\delta}_i; \cdots; \bm{\delta}_N)$.
  \STATE $\bm{U}^+ \leftarrow \bm{U}_n + c_n \bm{\Delta}; \bm{U}^- \leftarrow \bm{U}_n - c_n \bm{\Delta}$.
  \STATE Compute $\widehat{\nabla \Phi}(\bm{U}_n)$ based on Equation~\ref{eq:approx}.
  \STATE Compute $\bm{U}_{n+1}$ based on Equation~\ref{eq:update}.
  \STATE Set $(\bm{u}_1; \cdots; \bm{u}_i; \cdots; \bm{u}_N)  \leftarrow \bm{U}_{n+1}$.
  \FOR{$i \in \mathcal{V}$}
    \IF{$i \in \mathcal{P}$}
    \STATE $\bm{u}_i \leftarrow \min(\epsilon^+ \bm{x}_i , \bm{u}_i)$
    \STATE $\bm{u}_i \leftarrow \max (-\epsilon^- \bm{x}_i , \bm{u}_i)$
    \ELSE
    \STATE $\bm{u}_i \leftarrow  \bm{0}$
    \ENDIF
  \ENDFOR
  \STATE $\bm{U}_{n+1} \leftarrow(\bm{u}_1; \cdots; \bm{u}_i; \cdots; \bm{u}_N)$
\ENDFOR
\STATE $\bm{U} \leftarrow \bm{U}_{\texttt{MaxIter}+1}$
\STATE $\bm{X'} \leftarrow \bm{X} + \bm{U}$
\STATE Return $\bm{X'}, \bm{U}$
\end{algorithmic}
\label{alg:spsa}
\end{algorithm}

\subsection{Node Selection using Knapsack Greedy}
\label{sec:greedy}

This section focuses on determining the attack set $\mathcal{P}$ with a limited budget $B$. The cost $b_i$ is different across different nodes due to the level of difficulty in attacking the node. For example, urban roads may contain a more recent and secured information collection system ($b_i$ is high), while rural roads can be attacked easily ($b_i$ is low). In contrast, attacks on urban roads usually generate a higher impact on the traffic prediction methods because the urban traffic volumes are high. It can be seen that there is a trade-off between the cost and benefit when selecting the attack set $\mathcal{P}$. 

We review the formulation of the diffusion attack in Equation~\ref{eq:diff}, and it is actually similar to the 0-1 knapsack problem, except for that the utility of each node $i$ $\left(\phi_i(\bm{U})\right)$ is unknown \cite{martello1999dynamic}.  The attack set $\mathcal{P}$ can be viewed as a knapsack with maximum capacity $B$, and the nodes are items with their weight $w_i$. Node $i$ is added to $\mathcal{P}$ if $z_i = 1$. Due to the nature of integer programming, there is no provably efficient method to solve formulation~\ref{eq:diff}, which is a NP-hard problem. Real-world networks contain hundreds or thousands of nodes, and hence it is impractical to enumerate all possible integer solutions. In this paper, we develop a family of Knapsack Greedy (\textsc{Kg}) algorithms to solve for formulation~\ref{eq:diff}, and those algorithms are inspired by the original greedy algorithm for the knapsack problem. 

A trivial but insightful observation is that any perturbation $\mathbf{U}$ could reduce the performance of the prediction model. Proposition~\ref{prop:convex} shows that the convexity exists if we only perturb one node and the perturbation $u$ is small enough, then the object function in Formulation~\ref{eq:diff} is locally convex.
\begin{proposition}
\label{prop:convex}
The objective function $\Phi$ in Formulation~\ref{eq:diff} is locally convex under small perturbation $\bm{U}$.
\end{proposition}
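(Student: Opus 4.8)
Since the statement concerns perturbing a single node, I would fix the attacked node $j$ and view $\Phi$ as a function of the one nonzero row $\bm{u} \equiv \bm{u}_j \in \mathbb{R}^{S}$ of $\bm{U}$, writing $\Phi(\bm{u}) = \sum_{i \in \mathcal{V}} w_i \phi_i(\bm{u})$ with $\phi_i(\bm{u}) = \|\bm{r}_i(\bm{u})\|_2^2$ and residual $\bm{r}_i(\bm{u}) = \bm{y}'_i(\bm{X} + \bm{U}) - \bm{y}_i(\bm{X})$ (the loss $\mathcal{L}$ being MSE, as in Proposition~\ref{prop:diff}). The starting observation is that $\bm{r}_i(\bm{0}) = \bm{0}$ for every $i$, hence $\phi_i(\bm{0}) = 0$, $\Phi(\bm{0}) = 0$, and---since each $\phi_i \ge 0$ and the importance weights $w_i$ are nonnegative---the origin is a global minimizer of $\Phi$. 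So it suffices to show that $\Phi$ agrees with a convex function on a neighborhood of $\bm{0}$.

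\textbf{Route via piecewise linearity (\texttt{Relu}).} Because $\sigma$ is the \texttt{Relu} map (and $g$ is affine), $f(\bm{X} + \bm{U})$ is piecewise affine in $\bm{U}$: its pieces are cut out by the signs of the pre-activations. For a generic $\bm{X}$ none of these pre-activations vanishes at $\bm{U} = \bm{0}$, so by continuity there is an $\varepsilon$-ball around $\bm{0}$ on which every sign is frozen and $\bm{y}'_i(\bm{X} + \bm{U}) = \bm{y}_i(\bm{X}) + J_i \bm{u}$ is exactly affine, where $J_i \in \mathbb{R}^{T \times S}$ is the (constant, on that ball) Jacobian of the $i$th output block with respect to $\bm{u}$. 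On this ball $\phi_i(\bm{u}) = \bm{u}^\top J_i^\top J_i \bm{u}$ is an exact positive-semidefinite quadratic, and therefore $\Phi(\bm{u}) = \bm{u}^\top \big(\sum_{i} w_i J_i^\top J_i\big) \bm{u}$ is an exact convex quadratic, which proves local convexity; the same $\varepsilon$ quantifies how small the perturbation must be.

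\textbf{Route via Taylor expansion (smooth activations).} If one instead allows smooth activations, I would expand to second order. Differentiating $\phi_i = \bm{r}_i^\top \bm{r}_i$ twice gives $\nabla^2 \phi_i(\bm{u}) = 2 J_i(\bm{u})^\top J_i(\bm{u}) + 2\sum_{t} r_{it}(\bm{u}) \nabla^2 y'_{it}(\bm{u})$; at $\bm{u} = \bm{0}$ the residual term drops out because $\bm{r}_i(\bm{0}) = \bm{0}$, leaving $\nabla^2 \phi_i(\bm{0}) = 2 J_i(\bm{0})^\top J_i(\bm{0}) \succeq 0$ and hence $\nabla^2 \Phi(\bm{0}) = 2\sum_i w_i J_i(\bm{0})^\top J_i(\bm{0}) \succeq 0$. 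When this Hessian is positive definite (e.g. it has full rank) positive definiteness persists on a neighborhood by continuity of $\nabla^2 \Phi$ and strict local convexity follows; otherwise one reads the conclusion off the convex second-order model $\bm{u} \mapsto \bm{u}^\top \nabla^2 \Phi(\bm{0}) \bm{u}$ that governs $\Phi$ near $\bm{0}$.

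\textbf{Expected obstacle.} The only subtle point is getting from ``$\nabla^2 \Phi(\bm{0}) \succeq 0$'' to honest local convexity: semidefiniteness is not an open condition, so along a null direction of the Hessian a cubic term could in principle spoil convexity. The \texttt{Relu} route is the clean way out---it yields an exact quadratic on a whole neighborhood and simultaneously sidesteps the non-differentiability of \texttt{Relu}---so I would make it the main argument and relegate the Taylor computation to a remark covering smooth activations (or add a nondegeneracy/full-rank hypothesis there).
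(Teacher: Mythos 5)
Your proposal is correct and is substantially more careful than the paper's own proof, which consists of a single sentence: since $\Phi$ is smooth and attains its global optimum at $\bm{U}=\bm{0}$, there exists a region around $\bm{0}$ on which $\Phi$ is convex. That implication is precisely the fallacy you flag in your ``expected obstacle'' paragraph --- a smooth function with a global minimum at the origin need not be convex on any neighborhood of it (e.g.\ $\Phi(u_1,u_2)=u_1^2u_2^2$ has an indefinite Hessian arbitrarily close to $\bm{0}$) --- so the paper's argument really only delivers what your Taylor route delivers, namely $\nabla^2\Phi(\bm{0})\succeq 0$, and silently assumes the jump from there to honest local convexity. Your piecewise-affine route genuinely closes this gap: on the activation cell containing the unperturbed input, the \texttt{Relu} network is exactly affine in $\bm{U}$, each $\phi_i$ is an exact positive-semidefinite quadratic, and $\Phi$ is therefore convex on a whole ball; it also resolves the paper's internal tension of declaring $\Phi$ ``smooth'' while working with \texttt{Relu} activations. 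The only things to make explicit are the genericity hypothesis (no pre-activation vanishes exactly at $\bm{U}=\bm{0}$, without which no open cell contains the origin) and the assumption that $g$ is affine rather than merely Lipschitz; with those stated, your main argument is a complete proof, and your Gauss--Newton remark correctly delimits what survives for smooth activations.
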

\begin{proof} Given function $\Phi$ is smooth and attains global optimal when the perturbation $\bm{U} = \bm0$, there exists one region around $\bm{U} = \bm0$, in which $\Phi(\bm{U})$ is convex.
\end{proof}
Existing literature has shown that the convex separable nonlinear knapsack problems could be approximated with the greedy search framework described by Algorithm ~\ref{alg:gcg}  \cite{zhang2008unified}. Recalling Proposition~\ref{prop:diff}, the perturbation on GCN-based models would only arise local effect, which means if the attack nodes are selected L-hop away from each other, the objective of Formulation 3 would be separable. We also observed that the objective is locally convex given attack on only one node in Proposition~\ref{prop:convex}. Combing the enlightenment we get from Proposition ~\ref{prop:diff} and ~\ref{prop:convex}, the greedy search framework would work efficiently for solving formulation~\ref{eq:diff}.

The proposed solution procedure consists of two steps: 1) compute $\hat{\phi}_i$ to approximate $\phi_i$ for each $i$; 2) adopt the greedy algorithm for the standard knapsack problem with $\hat{\phi}_i$ as the utility. 
In Step 1, we proposed that the utility $\hat{\phi}_i$ can be obtained by \textsc{Spsa}. To be precise, we run Algorithm~\ref{alg:spsa} with $\mathcal{P} = \mathcal{V}$, and the algorithm outcome is $\bm{U}_{\mathcal{V}}$. Then, $\phi_i$ is approximated by $\hat{\phi}_i = \phi_i(\bm{U}_{\mathcal{V}})$; in Step 2, we initialize the attack set $\mathcal{P}$ as an empty set, then each node is added to the attack set sequentially with the highest utility over budget. 
The entire procedure is referred as \textsc{Kg-Spsa}, and details of the algorithm are presented in Algorithm~\ref{alg:gcg}.

\begin{algorithm}
\caption{\textsc{Kg-Spsa} for the diffusion attack on traffic prediction models.}
\begin{algorithmic}
\REQUIRE Traffic prediction model $f(\bm{X})$, input feature matrix $\bm{X}$, total budget $B$, and cost of each node $\{b_i\}_{i \in \mathcal{V}}$.
\ENSURE Adversarial sample $\bm{X}'$, optimal perturbation $\bm{U}$, and attack set $\mathcal{P}$.
\STATE Initialize $\mathcal{P} = \emptyset$.
\STATE Evaluate $\hat{\phi}_i = \phi_i(\bm{U}_{\mathcal{V}}), i\in \mathcal{V}$ with Algorithm~\ref{alg:spsa}.
\WHILE{$\sum\limits_{i \in \mathcal{P}} b_i \leq B$}
  \STATE Set $\texttt{max\_utility}=-\infty, \texttt{max\_idx}=-\infty$.
  \FOR{$i \in \mathcal{V}\setminus\mathcal{P}$}
    \IF{$\frac{\hat{\phi}_i}{b_i} > \texttt{max\_utility}$}
       \STATE Set $\texttt{max\_utility} = \frac{\hat{\phi}_i}{b_i}$.
       \STATE Set $\texttt{max\_idx} = i$.
    \ENDIF
  \ENDFOR
  \IF {$\sum\limits_{i \in \mathcal{P}} b_i + b_{\texttt{max\_idx}}\leq B$}
    \STATE{$\mathcal{P} = \mathcal{P} \cup \{\texttt{max\_idx}\}$}
  \ENDIF
\ENDWHILE
\STATE{Run Algorithm~\ref{alg:spsa} with fixed $\mathcal{P}$, obtain $\bm{X}'$, $\bm{U}$.}
\STATE Return $\bm{X'}, \bm{U}, \mathcal{P}$.
\end{algorithmic}
\label{alg:gcg}
\end{algorithm}

It is possible to adopt other methods to estimate $\hat{\phi}_i$ in Step 1, including clustering methods and graph centrality measures. These algorithms will be viewed as baseline algorithms and compared with \textsc{Kg-Spsa} in numerical experiments.

\section{Experiments}
\label{sec:exp}
In this section, we evaluate the performance of the proposed diffusion attack algorithm under different scenarios using real-world data.

\subsection{Experiment Setup}
\label{sec:expset}
We examine the proposed attack algorithm on three traffic prediction models and two datasets, and details are described as follows:

{\bf Traffic Data.} We consider two real-world traffic datasets:
\begin{itemize}
    \item \texttt{LA}: The \texttt{LA} dataset contains traffic speed obtained from 207 loop detectors in Los Angeles \cite{zhao2019t}, and The data ranges from March 1st to March 7th, 2012. The average degree of the adjacency matrix is 14. The speed data are collected every 5 minutes, and the average speed is 58km/h. The study area is showed in the upper part of Fig.~\ref{fig:scatter}.
    \item \texttt{HK}: The \texttt{HK} speed data is collected from an open data platform initiated by the Hong Kong government, and overall 179 roads are considered in the Hong Kong island and Kowloon area. The data ranges from May 1st to May 31st, 2020. The average degree of the adjacency matrix is 2. The speed data are collected every 5 minutes, and the average speed is 45km/h. The study area is showed in the lower part of Fig.~\ref{fig:scatter}.
\end{itemize}

{\bf Traffic prediction models.} We evaluate the developed diffusion attack framework on three traffic prediction models: \textsc{T-Gcn} \cite{zhao2019t}, \textsc{St-Gcn} \cite{yu2018spatio}, and \textsc{A3t-Gcn} \cite{zhu2020a3t}, which are all based on GCN structures. For each model, we set $S=12$ and $T=1$. 
To conduct the comprehensive evaluation, we train four variants of each model, which are the original model, the original model with \textsc{DropOut}, \textsc{DropNode}, and \textsc{DropEdge} regularization, respectively \cite{rong2020dropedge, dropconn}. For different drop regularization strategies, we set the drop probability to $30\%$. In Appendix~\ref{ap:train}, the Accuracy and Root Mean Squared Error (RMSE) of each model are showed in TABLE \ref{accuracy} and TABLE \ref{rmse}, and both measures are defined in \cite{zhao2019t}. Overall, accuracy of the different prediction models are around $90\%$ in testing data. 

{\bf Attack settings.} The parameter settings for the diffusion attack models and algorithms, as well as the evaluation criterion are discussed as follows:
\begin{itemize}
    \item {\em Model specifications.} In Equation~\ref{eq:diff}, we set the constraint $-z_i \bm{x}_{i} \leq \bm{u}_{i} \leq 0.5 z_i \bm{x}_{i} $, and $w_i = 1$, which means each node is equally important. The cost is defined as $b_i = \text{Degree}(i)= S_D(i)$, where $S_D(i) = \sum_{j} \bm{A}_{ij}$. $B = \{20, 50, 100, 150, 200\}$. The optimal target reduces to $\Phi(\bm{U}) = \sum_{i \in \mathcal{V}} \phi_i(\bm{U})$, where $\phi_i(\bm{U}) = y'_i(\bm{X}') - y_i (\bm{X})$. The attack algorithm aims to reduce the predicted speed and we intend to generate virtual ``congestion'' on the network. 
    \item {\em Evaluation of Algorithms.} We use Average Attack Influence (AAI) and Average Attack Influence Ratio (AAIR) to evaluate the effect of the diffusion attacks. We define
\begin{equation}
    \begin{array}{lllllll}
      \text{AAI} &=& \frac{1}{N} \sum_{i \in \mathcal{V}} |\phi_i(\bm{U})|\\
       \text{AAIR} &=& \frac{1}{N} \sum_{i \in \mathcal{V}}\frac{\phi_i(\bm{U})}{|y_i(\bm{X})|}
    \end{array},\nonumber
\end{equation}
where AAI represents the average degradation and AAIR represents the average degradation ratio of the prediction on the entire network, respectively. 
    \item {\em SPSA Setting.} For Algorithm~\ref{alg:spsa}, $a=0.328, c=0.1, \alpha=0.202, \gamma=0.101$, and $\eta=\frac{n}{10}$. For diffusion attack, $\texttt{MaxIter} = 30000$; for computing the $\hat{\phi}_i$, $\texttt{MaxIter} = 100$.
\end{itemize}

{\bf Baseline algorithms.} Because the diffusion attack is a newly proposed task, there are very few existing methods that can be used as baseline algorithms. In addition to the proposed \textsc{Kg-Spsa} approach, we modify and develop  8 algorithms for comparison. The major difference of each baseline algorithm lies in how to select the attack set $\mathcal{P}$ and whether to use the \textsc{Kg}-$\star$ greedy algorithm. \textsc{Degree} selects nodes with highest degree $S_D(i)$. \textsc{K-Medoids} selects nodes by clustering the nodes with geo-location features until reaching the total budget $B$ \cite{kaur2014k}, \textsc{Pagerank} selects nodes with highest pagerank scores  $S_{PR}(i) = \frac{1-\alpha}{N} + \alpha \sum_{j \in \mathcal{N}_i} \frac{S_{PR}(j)}{|\mathcal{N}_j|}$, where $\alpha = 0.85$, and \textsc{Betweenness} chooses nodes with high betweenness scores $S_{Bw}(i) = \sum_{j \neq i, k \neq i} \frac{\text{Path}_{jk}(i)}{\text{Path}_{jk}}$, where $\text{Path}_{jk}$ is the number of shortest path between node $j$ and $k$, and $\text{Path}_{jk}(i)$ is the number of shortest path that passes node $i$. The \textsc{Random} algorithm just selects the node randomly until meets the budget limit. \textsc{Spsa} selects the highest $\hat{\phi}_i = \phi_i(\bm{U}_{\mathcal{V}})$ by running Algorithm~\ref{alg:spsa} with $\mathcal{P} = \mathcal{V}$, and the greedy algorithm is not used in \textsc{Spsa}. When we use the greedy algorithm, it is also possible to use centrality measures such as pagerank and betweenness to represent $\hat{\phi}_i$. Different from \textsc{Pagerank} and \textsc{Betweenness}, which determine $\mathcal{P}$ by the highest scores, \textsc{Kg-Pagerank} and \textsc{Kg-Betweenness} approximate $\hat{\phi}_i$ by the pagerank and betweenness scores, followed by running Algorithm~\ref{alg:gcg} with different $\hat{\phi}_i$.


\subsection{Results of diffusion attack}
In this section, we present the experimental results. We first verify the diffusion effect when attacking a single node, then the performance of the proposed attack algorithm is evaluated and compared with baseline methods in different scenarios. Lastly, we examine the robustness of the proposed algorithm under different drop regularization strategies. 
\subsubsection{Diffusion Effects of attacks on a single node}
To demonstrate the diffusion phenomenon, we construct an attack on a single node for the three traffic prediction models, and the attack effect of different hops of neighbors is presented in Fig.~\ref{fig:property}. As can be seen, the attack mainly influences the ego node and its 1$\sim$2-hop neighbors, and the influence will diminish as the number of hops increases. As proven in Proposition~\ref{prop:diff}, for a $L$-layer GCN model, the diffusion only occurs within $L$-hop neighbors, which is verified in this experiment. The experimental results also indicate that the influence of a single node attack is localized, and a successful diffusion attack algorithm requires a scattered node selection strategy.


\begin{figure}[ht]
\centering
\includegraphics[height=4.2cm,width=7cm]{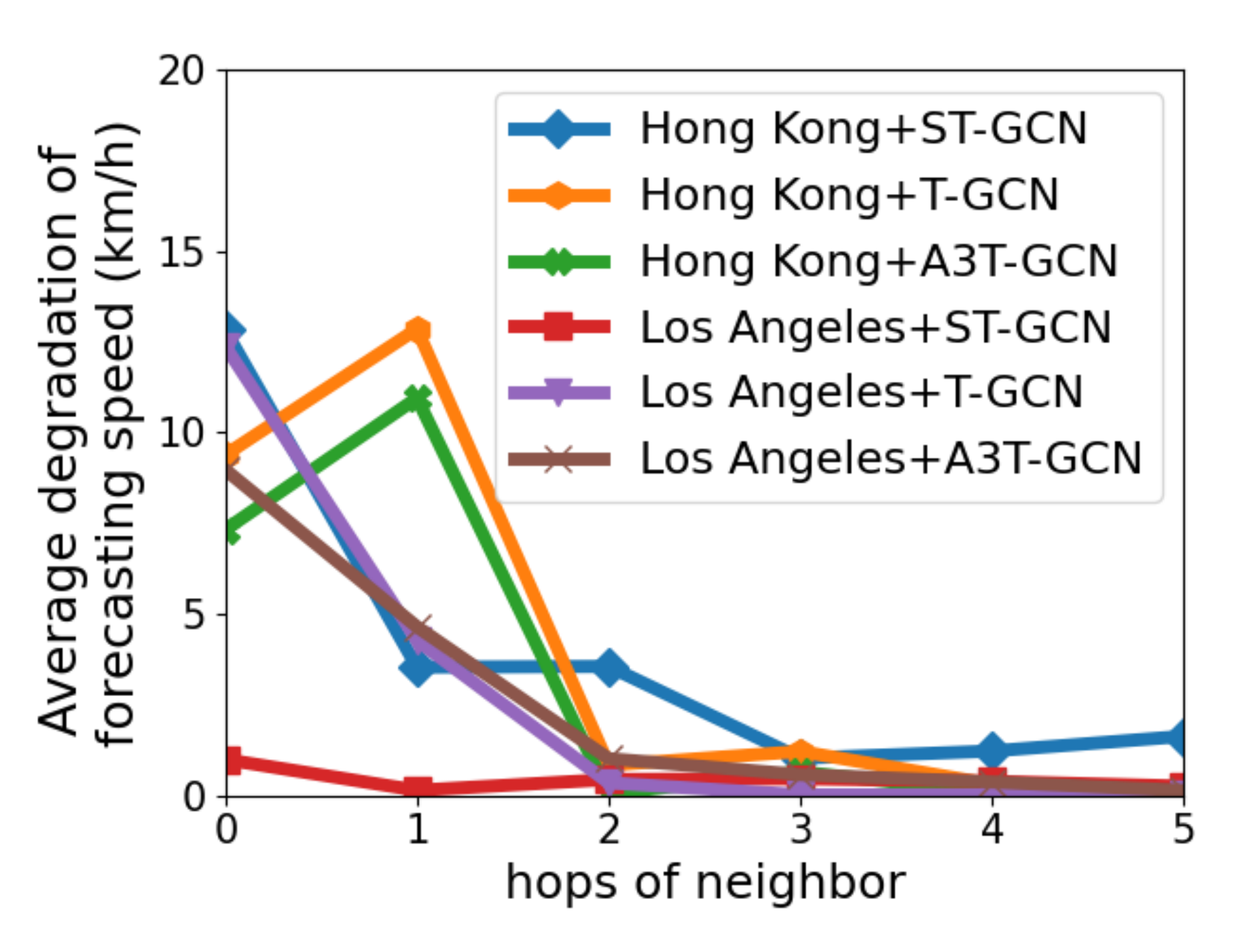}
\caption{Diffusion effects when attacking on a single node.}
\label{fig:property}
\end{figure}

\subsubsection{Comparisons of different attack algorithms} 

The proposed algorithm \textsc{Kg-Spsa} is compared with different baseline algorithms with $B = 50$, and the corresponding AAI is presented in TABLE~\ref{tbl:comparison} (unit: km/hour), and the AAIR table is presented in Appendix~\ref{sec:more}. The attack algorithms are categorized into two types: semi-black-box algorithms that know the graph structure, and black-box algorithms that only require inputs and outputs of the prediction models. From TABLE~\ref{tbl:comparison} one can see, the proposed algorithm \textsc{Kg-Spsa} outperforms all the baseline algorithms on \texttt{LA}, and \textsc{Kg}-$\star$ generally outperforms the original counterparts. Comparing with other methods, \textsc{Kg-Spsa} is a black-box method, which does not rely on knowledge of the graph structure ({\em i.e.} adjacency matrix $\bm{A}$). In real-world, it is challenging to obtain the information of $\bm{A}$ in the prediction system as the graph can be generated by different configurations such as sensor layout, network topology, and causal relationship, and this information is hidden to users \cite{Ye_2020}. Fig.~\ref{fig:scatter} presents the selected nodes by \textsc{Kg-Spsa} for different prediction methods with $B=50$, and the color (from green to red) represents AAIR of each node under the attack. To maximize the attack effect, the selected nodes distribute across the entire network, which is consistent with our previous conjecture.

\begin{table*}[ht]
\centering
\caption{Comparison of different diffusion attack algorithms in terms of AAI. ($B=50$)}
\begin{tabular}{llllllll} 
\toprule[1pt]
\multirow{2}{*}{Types} & \multirow{2}{*}{Algorithm} &  \multicolumn{3}{c}{
\texttt{LA}} &  \multicolumn{3}{c}{\texttt{HK}} \\
\cline{3-8}  
 & & \textsc{St-Gcn} & \textsc{T-Gcn} & \textsc{A3t-Gcn}&  \textsc{St-Gcn} & \textsc{T-Gcn} & \textsc{A3t-Gcn}\\
\midrule
\multirow{7}{*}{Semi-black-box} & \multirow{1}{*}{\textsc{Degree}}      &0.74   &0.65   &0.64   &5.30   &3.64   &4.22 \\
\cline{2-8}
&\multirow{1}{*}{\textsc{K-Medoids}}       &0.69  &0.87   &1.69   &12.41    &5.72   &7.74  \\
\cline{2-8}
&\multirow{1}{*}{\textsc{Pagerank}}        &2.41  &2.11   &2.61   &9.42   &4.84   &5.54 \\
\cline{2-8}
&\multirow{1}{*}{\textsc{Betweenness}}     &2.35  &1.56   &2.06   &16.28   &7.84   &19.88 \\
\cline{2-8}
&\multirow{1}{*}{\textsc{Kg-Betweenness}} &2.75  &1.92   &2.66   &17.37   &8.42   &24.44 \\
\cline{2-8}
&\multirow{1}{*}{\textsc{Kg-Pagerank}}    &4.74  &3.69   &5.06   &22.63   &$\bm{12.99}$  &$\bm{34.47}$ \\
\cline{1-8}
\multirow{4}{*}{Black-box}& \multirow{1}{*}{\textsc{Random}}   &1.06  &1.31   &1.86   &14.61   &8.74   &20.42 \\
\cline{2-8}
& \multirow{1}{*}{\textsc{Spsa}}                              &3.18  &1.46   &7.66   &18.60   &7.43   &28.97 \\
\cline{2-8}
&\multirow{1}{*}{\textsc{Kg-Spsa}}                           & $\bm{5.46}$  &$\bm{4.36}$   &$\bm{12.74}$   &$\bm{23.34}$ & {\em12.21} & {\em32.86} \\
\bottomrule[1pt]
  \end{tabular}
\label{tbl:comparison}
\end{table*}

\begin{figure*}[ht]
\centering
\includegraphics[height=7cm,width=18.15cm]{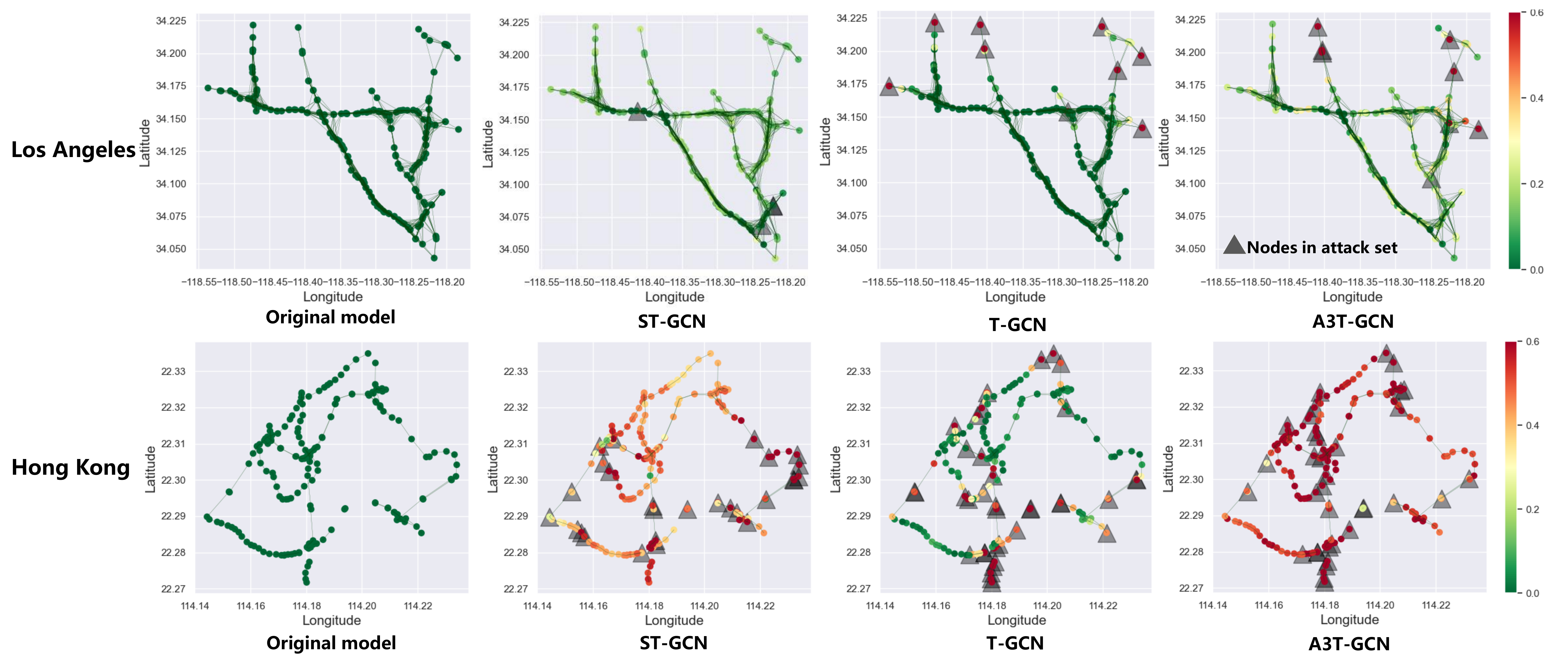}
\caption{The distribution of selected nodes and AAIR of each node under the attack algorithm \textsc{Kg-Spsa}. (the selected nodes are marked as triangle, and the color represents AAIR)}
\label{fig:scatter}
\end{figure*}

In addition, it is observed that the robustness of the three prediction models is different. \textsc{A3t-Gcn} demonstrates great vulnerability under attack algorithms, while both \textsc{T-Gcn} and \textsc{St-Gcn} are more robust. This could be due to the strengthened connection among nodes by the attention layers in \textsc{A3t-Gcn}, meanwhile, the strengthened connection can also increase the vulnerability of the prediction models. Comparing the dataset \texttt{LA}, predictions on \texttt{HK} are more vulnerable to adversarial diffusion attacks, which might be explained by the drastic changes of Hong Kong's traffic conditions within a day \cite{tam2011application}.

\subsubsection{Sensitivity analysis on the budget $B$}
To study the effect of budget $B$ on the diffusion attack, we run \textsc{Kg-Spsa} with $B \in [20, 50, 100, 150, 200]$ for both \texttt{LA} and \texttt{HK}, and the corresponding AAI is presented in Fig.~\ref{fig:budget}. The attack influence will increase when the total budget $B$ increases for both datasets while the trend is becoming marginal. It is also observed that prediction models on \texttt{HK} are more vulnerable, while \textsc{A3t-Gcn} is the least robust predictions models for both datasets.

\begin{figure}[ht]
\centering
\includegraphics[height=4.3cm,width=8.85cm]{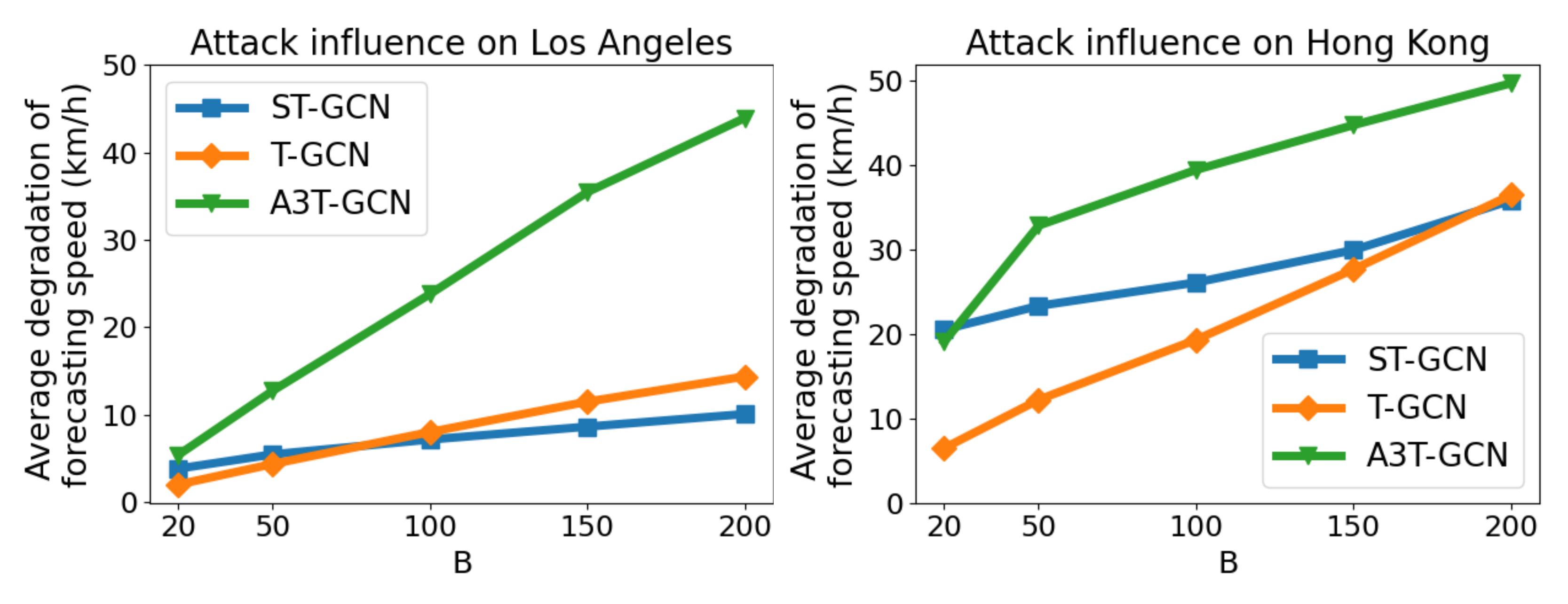}
\caption{Attack effect with different budget $B$.}
\label{fig:budget}
\end{figure}

\subsubsection{Performance of the proposed algorithm on drop regularization strategies}
To better understand the performance of the proposed attack algorithms, we examine the attack effects on the prediction models with drop regularization. Existing studies have widely demonstrated that drop regularization strategies could improve the robustness of the GCN-based model \cite{rong2020dropedge, dropconn}, hence it is crucial to show that the performance of the proposed methods remains effective under different drop regularization strategies. To this end, we conduct diffusion attacks with \textsc{Kg-Spsa} on the prediction models trained with \textsc{DropOut}, \textsc{DropNode}, and \textsc{DropEgde}. \textsc{DropOut} randomly drops rows in feature matrix $\bm{X}$, \textsc{DropNode} randomly drops a subset of the nodes on the graph, and \textsc{DropEgde} will randomly drop the edges of the graph for each epoch during the model training. Details of the models are presented in section~\ref{sec:expset}. We run \textsc{Kg-Spsa} and \textsc{Kg-Pagerank} for prediction models with different drop regularization on the two datasets, and the algorithm performance is presented in TABLE~\ref{tbl:drop}. The reason we choose \textsc{Kg-Spsa} and \textsc{Kg-Pagerank} is because both algorithms outperform other semi-black-box and black-box algorithms. For dataset \texttt{LA}, \textsc{Kg-Spsa} outperforms \textsc{Kg-Pagerank} on all the prediction models, and \textsc{Kg-Spsa} is slightly better on \texttt{HK}. In most cases, \textsc{DropOut} could degrade the performance of the attack algorithms, while the other two drop regularization strategies do not protect the prediction models. Overall, the proposed diffusion attack algorithms could still generate adversarial samples under various drop regularization strategies.



\begin{table}[ht]
\centering
\caption{Comparison of \textsc{Kg-Spsa} and \textsc{Kg-Pagerank} on the three defense strategies in terms of AAI. ($B=50$)}
\label{tbl:drop}
\resizebox{0.5\textwidth}{!}{
\begin{tabular}{lll|lll} 
\toprule
Datasets  & Model & Baseline & \textsc{DropOut} & \textsc{DropNode}  & \textsc{DropEdge} \\

\midrule
\multicolumn{6}{c}{\textsc{Kg-Spsa}}\\
\cline{1-6}
\multirow{3}{*}{\texttt{LA}} &\textsc{St-Gcn} & {\bf5.46}  & {\bf 5.65} & {\bf 6.86} & {\bf 11.60} \\
\cline{2-6}
                     &\textsc{T-Gcn}  & {\bf 4.36}  & {\bf 3.43} & {\bf 3.49} & {\bf 2.79} \\
\cline{2-6}
                     &\textsc{A3t-Gcn} &{\bf 12.74} & {\bf 3.07} & {\bf 18.59} & {\bf 6.18} \\
\cline{1-6}
\multirow{3}{*}{\texttt{HK}}  &\textsc{St-Gcn} & {\bf 23.34}  &11.89 &14.53 & {\bf 25.84} \\
\cline{2-6}
                     &\textsc{T-Gcn}&12.21 & {\bf 12.34} &11.63 &11.43 \\
\cline{2-6}
                     &\textsc{A3t-Gcn} &32.86 & {\bf 41.77} &11.44 & {\bf 91.77} \\
\cline{1-6}                    
\multicolumn{6}{c}{\textsc{Kg-Pagerank}}\\
\cline{1-6}
\multirow{3}{*}{\texttt{LA}} &\textsc{St-Gcn}&4.74 &2.71 &4.65 &6.67 \\
\cline{2-6}
                     &\textsc{T-Gcn}&3.69 &3.06 &3.21 &2.47 \\
\cline{2-6}
                     &\textsc{A3t-Gcn} &5.06 &2.68 &6.07 &3.42 \\
\cline{1-6}
\multirow{3}{*}{\texttt{HK}}  &\textsc{St-Gcn} &22.63 &{\bf 12.84} &{\bf 16.02} &24.84 \\
\cline{2-6}
                     &\textsc{T-Gcn} & {\bf 12.99} &{\bf 12.34} &{\bf 12.31} & {\bf 12.04} \\
\cline{2-6}
                     &\textsc{A3t-Gcn} &{\bf 34.47} &28.34 & {\bf 24.10} &74.93 \\
\bottomrule
  \end{tabular}
  }
\end{table}


\subsection{Discussions}
In this section, we discuss the implications and suggestions for improving the robustness of the traffic prediction models. In the previous section, we carry out numerical experiments to demonstrate the performance of the proposed attack algorithms on different datasets, prediction models, and regularization strategies. Based on the experimental results, we provide the following suggestions to improve the model robustness during different phrases:
\begin{itemize}
    \item {\bf Model selection.} When choosing GCN-based models for speed prediction, RNN-based models are generally more robust than attention-based models. Depending on the data and city scale, it is suggested to choose models with simpler layers, as the complex layers in \textsc{St-Gcn} and \textsc{A3t-Gcn} can sometimes degrade significantly under attacks. There is a trade-off between accuracy and robustness, so it is critical to balance the accuracy and robustness for practical usage. 
    \item {\bf Model regularization.} Based on the experimental results, it is suggested to adopt \textsc{DropOut} during the training, as the model accuracy remains high while the robustness can be improved after the \textsc{DropOut} training. It is also suggested to test different drop regularization strategies before the actual deployment.
    \item {\bf Model privacy.} The graph structure should not be disclosed to the public, as it can significantly improve the efficiency of the attack algorithms. It is also suggested to frequently update the prediction model, as the attack models rely on multiple trials and errors on the prediction models. If the prediction model updates frequently, then the robustness of the entire prediction system can be significantly improved.  
    \item {\bf Active defending strategies.} Before actual deployment, it is necessary to comprehensively test the vulnerability of the prediction models and to identify the critical nodes with significant attack influence. For those important nodes, we can enhance the protection by regular patrol in the physical world and consistency checking in the cyber system. For example, if an RSU on a road segment is identified to be critical, then this device should be protected physically \cite{9273042}. If the attack on this device indeed occurs, the traffic center should spot the anomaly in real-time and block the information sent from this device. 
\end{itemize}




\section{Conclusion}
\label{sec:con}
In this paper, we explore the robustness and vulnerability issues of graph-based neural network models for traffic prediction. Different from existing adversarial attack tasks, adversarial attacks for traffic prediction require to degrade the model performance for the entire network, rather than a specific sample of nodes. Given this, we propose a novel concept of diffusion attack, which aims to reduce the prediction accuracy of the whole traffic network by perturbing a small number of nodes. To solve for the diffusion attack task, we develop an algorithm \textsc{Kg-Spsa}, which consists of two major components: 1) using SPSA to generate the optimal perturbations to maximize the attack effects; 2) adapting the greedy algorithm in the knapsack problem to select the most critical nodes. The proposed algorithm is examined with three widely used GCN-based traffic prediction models (\textsc{St-Gcn}, \textsc{T-Gcn}, and \textsc{A3t-Gcn}) on the Los Angeles and Hong Kong datasets. The experimental results indicate that the proposed algorithm outperforms the baseline algorithms under various scenarios, which demonstrates the effectiveness and efficiency of the proposed algorithm. In addition, the proposed attack algorithms can still generate effective adversarial samples for traffic prediction models trained with drop regularization. This study could help the public agencies and private sectors better understand the robustness and vulnerability of GCN-based traffic prediction models under adversarial attacks, and strategies to improve the model robustness in different phrases are also discussed.

As for the future research directions, the proposed attack algorithms could be applied to not only road traffic prediction, but also other traffic modes such as urban railway transit systems, ride-sourcing services, and parking systems \cite{yang2019deep, zhang2021short}. It is also important to study the effect of adversarial attacks on flow prediction, origin-destination demand prediction, and other tasks relying on the GCN-based models. For the users of the traffic prediction models, it is critical to develop models for defending adversarial attacks and protecting traffic prediction results. Another way to protect the prediction model is through real-time anomaly detection and filtering of the incoming data stream, which could be a new research direction for improving the robustness of the traffic prediction models under adversarial attacks.

\section*{Supplementary Materials}
The proposed diffusion attack algorithm and evaluation framework are implemented in Python and open-sourced on GitHub (\url{https://github.com/LYZ98/Adversarial-Diffusion-Attacks-on-Graph-based-Traffic-Prediction-Models}).

\section*{ACKNOWLEDGMENT}
The work described in this study was supported by a grant funded by the Hong Kong Polytechnic University (Project No. P0033933). The contents of this paper reflect the views of the authors, who are responsible for the facts and the accuracy of the information presented herein. 


\appendices

\section{More Details about the three traffic prediction models and attack results}
\label{ap:train}

To train the three traffic prediction models, we set the learning rate to be $0.001$, batch size to be $32$, and the number of epoch to be  $300$. The two datasets are divided into two parts, in which $80\%$ and $20\%$ are training set and testing set, respectively. Mean Squared Error (MSE) is used as the loss function \cite{zhao2019t}, and Adam is adopted as the optimizer. The testing accuracy in terms of accuracy and Root Mean Squared Error (RMSE) of the trained prediction models are presented in TABLE~\ref{accuracy} and TABLE~\ref{rmse}, respectively. Overall, all the prediction models could achieve high prediction accuracy on both datasets.

\begin{table}[ht]
\centering
\caption{Accuracy of trained models}
\label{accuracy}
\resizebox{0.5\textwidth}{!}{
\begin{tabular}{llllll} 
\toprule
Dataset  & Model  & Baseline & \textsc{Dropout} & \textsc{Dropnode}  & \textsc{Dropedge} \\
\midrule
\multirow{3}{*}{\texttt{LA}} &\textsc{St-Gcn} &92.68\% &92.70\% &92.77\% &88.18\% \\
\cline{2-6}
                     &\textsc{T-Gcn}  &90.44\% &89.24\% &90.01\% &90.05\% \\
\cline{2-6}
                     &\textsc{A3t-Gcn} &89.04\% &72.71\% &89.15\% &89.84\% \\
\cline{1-6}
\multirow{3}{*}{\texttt{HK}}  &\textsc{St-Gcn} &92.88\% &93.20\% &93.17\% &92.80\% \\
\cline{2-6}
                     &\textsc{T-Gcn} &88.50\% &87.70\% &86.78\% &88.08\% \\
\cline{2-6}
                     &\textsc{A3t-Gcn} &89.47\% &87.30\% &80.57\% &88.12\% \\
\bottomrule
\end{tabular}
}
\end{table}

\begin{table}[ht]
\centering
\caption{RMSE of trained models}
\label{rmse}
\resizebox{0.5\textwidth}{!}{
\begin{tabular}{llllll} 
\toprule
Dataset  & Model  & Baseline & \textsc{DropOut} & 
\textsc{DropNode}  & \textsc{DropEdge} \\

\midrule
\multirow{3}{*}{\texttt{LA}} &\textsc{St-Gcn} &4.30 &4.28 &4.25 &6.95 \\
\cline{2-6}
                     &\textsc{T-Gcn}  &5.61 &6.32 &5.86 &5.84 \\
\cline{2-6}
                     &\textsc{A3t-Gcn} &6.43 &16.02  &6.36 &5.96 \\
\cline{1-6}
\multirow{3}{*}{\texttt{HK}}  &\textsc{St-Gcn} &3.55 &3.39 &3.41 &3.59 \\
\cline{2-6}
                     &\textsc{T-Gcn} &5.74 &6.13 &6.59 &5.95 \\
\cline{2-6}
                     &\textsc{A3t-Gcn} &5.25 &6.33 &9.68 &5.92 \\
\bottomrule
  \end{tabular}
  }
\end{table}

\section{Comparisons of different attack algorithms in terms of AAIR. ($B=50$)}
\label{sec:more}

Similar to TABLE~\ref{tbl:comparison}, we evaluate the performance of different attack algorithms in terms of AAIR in TABLE~\ref{tbl:comparison2}. In general, similar arguments could be obtained based on the AAIR, and the proposed \textsc{Kg-Spsa} outperforms other baseline models in \texttt{LA}, and performs similarly as the semi-black-box algorithms in \texttt{HK}.

\begin{table*}[t]
\centering
\caption{Comparison of different diffusion attack algorithms in terms of AAIR. ($B=50$)}
\begin{tabular}{llllllll} 
\toprule[1pt]
\multirow{2}{*}{Types} & \multirow{2}{*}{Algorithm} &  \multicolumn{3}{c}{
\texttt{LA}} &  \multicolumn{3}{c}{\texttt{HK}} \\
\cline{3-8}  
 & & \textsc{St-Gcn} & \textsc{T-Gcn} & \textsc{A3t-Gcn}& \textsc{St-Gcn} & \textsc{T-Gcn}  &\textsc{A3t-Gcn}\\
\midrule
\multirow{7}{*}{Semi-blackbox} & \multirow{1}{*}{\textsc{Degree}}      &1.54\%   &1.88\%   &2.07\%   &12.35\%   &8.69\%   &10.28\% \\
\cline{2-8}
&\multirow{1}{*}{\textsc{K-Medoids}}   &1.37\%  &1.77\%   &2.33\%   &25.19\%    &11.84\%   &17.09\%  \\
\cline{2-8}
&\multirow{1}{*}{\textsc{Pagerank}}   & 3.79\% &3.77\%   &3.80\%   &19.90\%   &10.77\%   &12.57\% \\
\cline{2-8}
&\multirow{1}{*}{\textsc{Betweenness}}   & 3.83\% &3.70\%   &3.34\%   &30.95\%   &14.46\%   &43.92\% \\
\cline{2-8}
&\multirow{1}{*}{\textsc{Kg-Betweenness}}    & 4.73\% &4.21\%   &2.55\%   &32.69\%   &15.07\%   &52.50\% \\
\cline{2-8}
&\multirow{1}{*}{\textsc{Kg-Pagerank}}   &7.88\% &6.99\%   &8.38\%   &42.27\%   &23.91\%   &$\bm{72.37}$\% \\
\cline{1-8}
\multirow{4}{*}{Blackbox}& \multirow{1}{*}{\textsc{Random}}    & 1.65\% &2.51\%   &3.19\%   &28.57\%   &16.63\%   &45.33\% \\
\cline{2-8}
& \multirow{1}{*}{\textsc{Spsa}}   & 5.80\% &3.07\%   &12.36\%   &35.61\%   &15.00\%   &60.63\% \\
\cline{2-8}
&\multirow{1}{*}{\textsc{Kg-Spsa}}  & $\bm{8.32}$\% &$\bm{7.76}$\%   &$\bm{22.77}$\%   &$\bm{43.25}$\%   &$\bm{24.26}$\%   &70.21\% \\
\bottomrule[1pt]
  \end{tabular}
\label{tbl:comparison2}
\end{table*}

Results in TABLE~\ref{tbl:drop2} follow the same patterns as in TABLE~\ref{tbl:drop}, and one can observe that the proposed attack algorithms can still generate effective adversarial samples in terms of AAIR.

\begin{table}[htbp]
\centering
\caption{Comparison of \textsc{Kg-Spsa} and \textsc{Kg-Pagerank} on the three defense strategies im terms of AAIR. ($B=50$)}
\label{tbl:drop2}
\resizebox{0.5\textwidth}{!}{
\begin{tabular}{lll|lll} 
\toprule
Dataset  & Model &Baseline & \textsc{DropOut} & \textsc{DropNode}  & \textsc{DropEdge} \\

\midrule
\multicolumn{6}{c}{\textsc{Kg-Spsa}}\\
\cline{1-6}
\multirow{3}{*}{\texttt{LA}} &\textsc{St-Gcn} &{\bf 8.32}\% &{\bf 8.73}\% &{\bf 11.36}\% &{\bf 13.88}\% \\
\cline{2-6}
                     &\textsc{T-Gcn}  &{\bf7.76}\% &{\bf 6.21}\% &{\bf 6.15}\% &{\bf 4.98}\% \\
\cline{2-6}
                     &\textsc{A3t-Gcn} &{\bf22.77}\% &{\bf 8.33}\% &{\bf38.11}\% &{\bf14.35}\% \\
\cline{1-6}
\multirow{3}{*}{\texttt{HK}}  &\textsc{St-Gcn} &{\bf43.25}\% &23.66\% &27.51\% &{\bf46.36}\% \\
\cline{2-6}
                     &\textsc{T-Gcn} &{\bf24.26}\% &{\bf23.47}\% &{\bf22.83}\% &22.75\% \\
\cline{2-6}
                     &\textsc{A3t-Gcn} &70.21\% &{\bf79.80}\% &24.34\% &{\bf230.85}\% \\
\cline{1-6}                    
\multicolumn{6}{c}{\textsc{Kg-Pagerank}}\\
\cline{1-6}
\multirow{3}{*}{\texttt{LA}} &\textsc{St-Gcn} &7.88\% &4.47\% &7.84\% &9.38\% \\
\cline{2-6}
                     &\textsc{T-Gcn} &6.99\% &5.60\% &5.71\% &4.44\% \\
\cline{2-6}
                     &\textsc{A3t-Gcn} &8.38\% &6.93\% &12.08\% &6.60\% \\
\cline{1-6}
\multirow{3}{*}{\texttt{HK}}  &\textsc{St-Gcn} &42.27\% &{\bf24.47}\% &{\bf29.25}\% &45.24\% \\
\cline{2-6}
                     &\textsc{T-Gcn} &23.91\% &23.09\% &22.07\% &{\bf22.86}\% \\
\cline{2-6}
                     &\textsc{A3t-Gcn} &{\bf72.37}\% &52.71\% &{\bf56.86}\% &174.56\% \\
\bottomrule
  \end{tabular}
  }
\end{table}

\ifCLASSOPTIONcaptionsoff
  \newpage
\fi


\bibliographystyle{IEEEtran}
\bibliography{ref}
%


%

\begin{IEEEbiography}[{\includegraphics[width=1in,height=1.25in,clip,keepaspectratio]{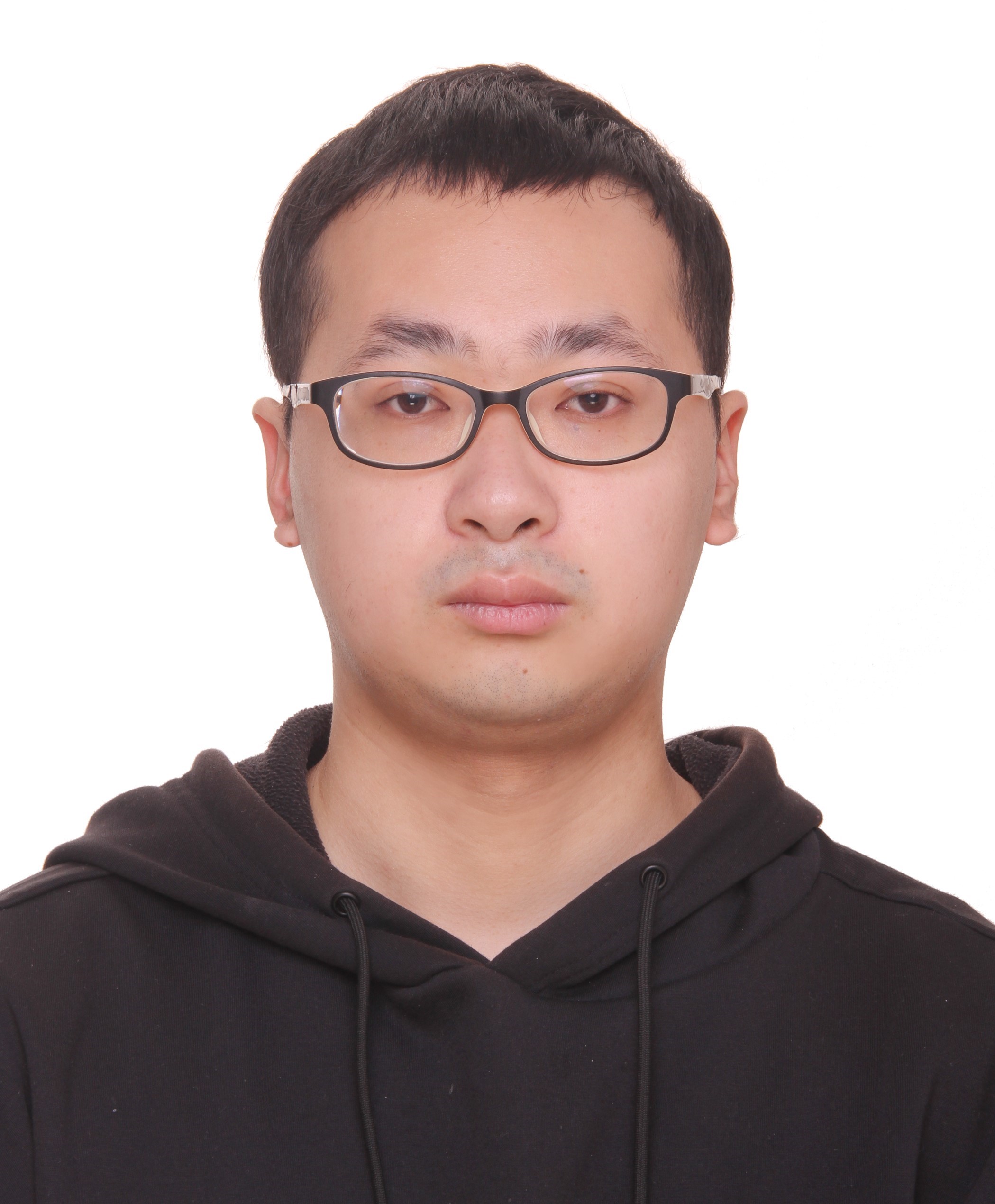}}]{Lyuyi ZHU} is an undergraduate student from College of Civil Engineering and Architecture, Zhejiang University, Hangzhou, China. He will join the School of Data Science, City University of Hong Kong as a PhD student. His research interest includes machine learning, optimization and numerical method. 
\end{IEEEbiography}

\begin{IEEEbiography}[{\includegraphics[width=1in,height=1.25in,clip,keepaspectratio]{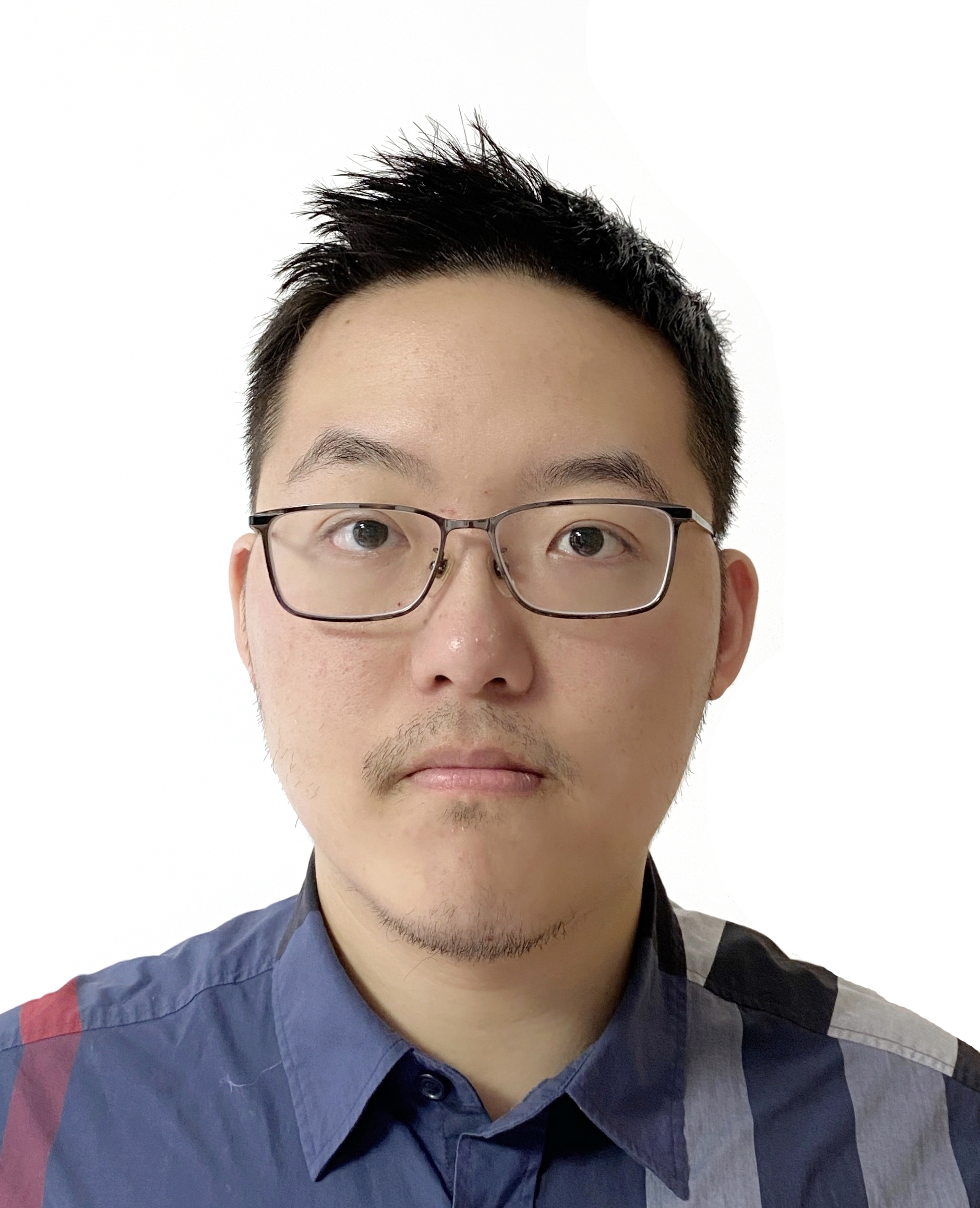}}]{Kairui Feng} is currently a Ph.D. student from Department of Civil and Environmental Engineering, Princeton University, New Jersey, USA. He received bachelor’s degrees in Civil Engineering and Mathematics from Tsinghua University, China.  His research interest includes infrastructure system modeling/optimization and climate change using data-driven and numerical approaches. 

\end{IEEEbiography}

\begin{IEEEbiography}[{\includegraphics[width=1in,height=1.25in,clip,keepaspectratio]{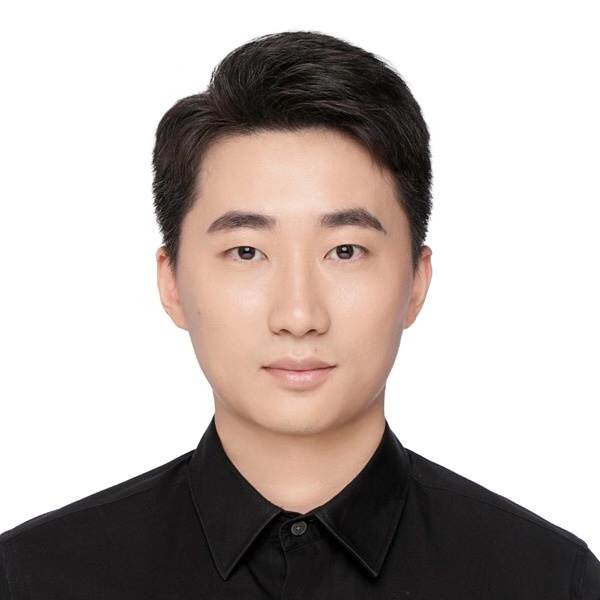}}]{Ziyuan Pu} is currently a Lecturer (Assistant Professor) at Monash University. He received B.S. degree in transportation engineering in 2010 at Southeast University, China. He received M.S. and Ph.D. degree in civil and environmental engineering in 2015 and 2020, respectively, at the University of Washington, US. His research interest includes transportation data science, smart transportation infrastructures, connected and autonomous vehicles (CAV), and urban computing.
\end{IEEEbiography}


\begin{IEEEbiography}[{\includegraphics[width=1in,height=1.25in,clip,keepaspectratio]{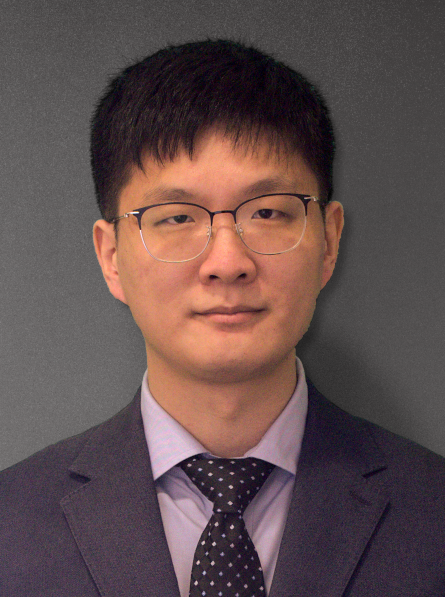}}]{Wei Ma} received bachelor’s degrees in Civil Engineering and Mathematics from Tsinghua University, China, master degrees in Machine Learning and Civil and Environmental Engineering, and PhD degree in Civil and Environmental Engineering from Carnegie Mellon University, USA. He is currently an assistant professor with the Department of Civil and Environmental Engineering at the Hong Kong Polytechnic University (PolyU). His research focuses on intersection of machine learning, data mining, and transportation network modeling, with applications for smart and sustainable mobility systems. He has received awards for research excellence and his contributions to the area, including 2020 Mao Yisheng Outstanding Dissertation Award, and best paper award (theoretical track) at INFORMS Data Mining and Decision Analytics Workshop.
\end{IEEEbiography}




\end{document}